 \definecolor{darkgreen}{rgb}{0,0.5,0}
 \newcommand{\Aquad}{\subscr{\mathscr{A}}{quad}}
 \newcommand{\sout}[1]{}
\newtheorem{theorem}{Theorem}[section]
\newtheorem{lemma}[theorem]{Lemma}
\newtheorem{problem}[theorem]{Problem}
\newtheorem{proposition}[theorem]{Proposition}
\newtheorem{corollary}[theorem]{Corollary}
\newtheorem{definition}[theorem]{Definition}
\newtheorem{remark}[theorem]{Remark}
\numberwithin{equation}{section}
\newcommand{\R}{{\mathbb{R}}}
\newcommand{\N}{{\mathbb{N}}}
\renewcommand{\dim}{\mathrm{dim}}
\newcommand{\cl}{\mathrm{cl}}
\newcommand{\real}{{\mathbb{R}}}
\newcommand{\vect}{{\mathrm{vec}}}
\newcommand{\pder}[2]{\frac{\partial #1}{\partial #2}}
\newcommand\subscr[2]{#1_{\textup{#2}}}
\newcommand\upscr[2]{#1^{\textup{#2}}}
\newcommand{\xin}{\upscr{x}{init}}
\newcommand{\xf}{\upscr{x}{fin}}
\newcommand{\lin}{\upscr{\ell}{init}}
\newcommand{\lf}{\upscr{\ell}{fin}}
\newcommand{\Xin}{\upscr{X}{init}}
\newcommand{\tin}{\upscr{t}{init}}
\newcommand{\Xf}{\upscr{X}{fin}}
\newcommand{\tf}{\upscr{t}{fin}}
\newcommand{\Ain}{\upscr{A}{init}}
\newcommand{\Af}{\upscr{A}{fin}}
\newcommand{\sampleset}{\subscr{E}{samples}}
\newcommand{\VecE}{Z}
\newcommand{\oprocendsymbol}{\hbox{$\bullet$}}
\newcommand{\oprocend}{\relax\ifmmode\else\unskip\hfill\fi\oprocendsymbol}
\newcommand{\longthmtitle}[1]{\mbox{}\textup{\textbf{(#1):}}}
\begin{document}

\begin{abstract}
In this paper we show that deep residual neural networks have the power of universal approximation by using, in an essential manner, the observation that these networks can be modeled as nonlinear control systems. We first study the problem of using a deep residual neural network to exactly memorize training data by formulating it as a controllability problem for an ensemble control system. Using techniques from geometric control theory, we identify a class of activation functions that 
allow us to ensure controllability on an open and dense submanifold of sample points. Using this result, and resorting to the notion of 
monotonicity, we establish that any continuous function can be approximated on a compact set to arbitrary accuracy, with respect to the uniform norm, by this class of neural networks. Moreover, we provide optimal bounds on the number of required neurons. 
\end{abstract}

\title[]{Universal approximation power of deep neural networks\\via nonlinear control theory}
\thanks{The work of the first author was supported the CONIX research center, one of six centers in JUMP, a Semiconductor Research Corporation (SRC) program sponsored by DARPA. The work of the second author was supported by the Alexander von Humboldt Foundation, and the Natural Sciences and Engineering Research Council of Canada.}

\author[Paulo Tabuada]{Paulo Tabuada}
\address{Department of Electrical and Computer Engineering\\
University of California at Los Angeles,
Los Angeles, CA 90095}
\email{tabuada@ee.ucla.edu}
\urladdr{http://www.ee.ucla.edu/$\sim$tabuada}

\author[Bahman Gharesifard ]{Bahman Gharesifard }
\address{}
\email{gharesifard@ucla.ca}
\urladdr{https://gharesifard.github.io}

\maketitle

\section{Introduction} 

The use of control-theoretic techniques to analyze the input-output behavior of neural networks has a long history within the control community. The classical work of Cybenko (see~\cite{GC:89}) on universal approximation that appeared in the Mathematics of Control, Signals, and Systems is a testimony to the role that system theory played in advancing this subject from early on. Prior to this, the IEEE Control System Magazine had a special section on neural network for systems and control which addressed the stability~\cite{BB:88} and training~\cite{DP-AS-AAY:88} of neural networks. In fact, the use of dynamical and control systems to describe and analyze neural networks goes back at least to the 70's. For example, Wilson-Cowan's  equations~\cite{W+C} are differential equations and so is the model proposed by Hopfield in~\cite{Hopfield3088}. These techniques have been used to study several problems such as weight identifiability from data~\cite{ALBERTINI1993975,albertini1993uniqueness}, controllability~\cite{SONTAG1999121,sontag1997complete}, stability~\cite{20200,HIRSCH1989331}, and approximation of dynamical systems~\cite{KF-YN:93}.

Using neural networks within a control-loop was immediately well-motivated and, had it not been due to training issues in large-scale settings, the subject would have developed even faster~\cite{PJA:90}. Equally important to this discussion is the lack of robustness guarantees when placing a neural network in closed-loop system, which we will not dwell on here, as the primary subject of our work is exploring the universal approximation power of deep neural network. 


Roughly speaking, the universal approximation problem for neural networks asks if a neural network can be used to approximate an unknown function when the only knowledge available about such function is its evaluation on a finite number of sample points. The quality of the approximation is measured using a norm and different norms lead to different answers to this problem. In this paper we focus on the uniform norm, noting that uniform approximation results on compact sets in $L^p$ norms follow as a special case. 
Much of earlier work at the intersection of control theory and neural networks focused on neural networks with \emph{unbounded width}, i.e., with no bound on the number of nodes that can be used in each layer of the architecture of the neural network. Having no bound on the width is essential in the classical work on universal approximation, see for example~\cite{KH:91,AP:99,GC:89}. In contrast, \emph{deep} neural networks have many layers, but bounded width. It has been empirically observed that deep networks have better approximation capabilities than their shallow counterparts and are easier to train~\cite{10.5555/2969033.2969123,DBLP:conf/iclr/UrbanGKAWMPRC17}. An intuitive explanation for this fact is based on the different ways in which these types of networks perform function approximation. While shallow networks prioritize parallel compositions of simple functions (the number of neurons per layer is a measure of parallelism), deep networks prioritize sequential compositions of simple functions (the number of layers is a measure of sequentiality).

\subsection{Related work}
The objective of this paper is to demonstrate how tools from control theory can be used to provide sharp universal approximation results for deep residual neural networks. Earlier work on controllability of differential equation models for neural networks, e.g.,~\cite{SONTAG1999121}, assumed the weights to be constant and that an exogenous control signal was fed into the neurons. In contrast, we regard the weights as control inputs and that no additional exogenous control input is present. These two different interpretations of the model lead to two very different technical problems. To elaborate further, we consider controllability problems where the same input signal (the weights) is used to simultaneously steer multiple initial states (corresponding to sample points in the domain of the function to be approximated) to multiple final states (corresponding to the evaluation of the function to be approximated on the sample points). This naturally leads to the problem of controlling ensembles of control systems. Most of the work in this area, see for instance~\cite{LJS-KN:06,UH-MS:14,brockett2007optimal}, considers parametrized ensembles of control systems, i.e., families of control systems where each member of the family is different. In the setting of our work, the ensemble control system is formed by exact copies of the control system modeling the neural network, albeit initialized at different sample points. In this sense, our work is most closely related to the setting of~\cite{AA-AS:20,AA-AS:20-deep} where controllability results for ensembles of infinitely copies of the same control system are provided. Whereas the results in~\cite{AA-AS:20,AA-AS:20-deep} extend Lie algebraic techniques for controllability from finite dimensional to infinite dimensional state spaces, we use Lie algebraic techniques to study controllability of finite ensembles and obtain approximation results for infinite ensembles by using the notion of monotonicity. Moreover, by focusing on the specific control systems arising from deep residual networks we are able to provide easier to verify controllability conditions than those provided in~\cite{AA-AS:20,AA-AS:20-deep} for more general control systems. Controllability of finite ensembles of control systems motivated by neural network applications was investigated in~\cite{CC-ML-JM:19} where it is shown that controllability is a generic property and that, for control systems that are linear in the inputs, five inputs suffice. These results are insightful but they do not apply to specific control systems such as those describing residual networks and studied in this paper. Moreover the results in~\cite{CC-ML-JM:19} do not address the problem of universal approximation in the infinity norm. Finally, it is worth pointing out to the work~\cite{CE-BG-DP-EZ:20} where a neural ordinary differential equation perspective on supervised learning is provided inspired by turnpike theory in optimal control.

In the machine learning community, dynamical and control systems have been utilized  for the analysis of deep neural networks~\cite{Proposal,Haber_2017,lu2017finite}. The  universal approximation problem has also been the subject of previous studies. In~\cite{OneNeuron}, by focusing on ReLU activation functions, it was shown that one neuron per layer suffices to approximate an arbitrary Lebesgue integrable real-valued function with respect to the $L^1$ norm. 
Closer to this paper are the results in~\cite{QL-TL-ZS:19} establishing universal approximation,  with respect to the $L^p$ norm, $1\le p<\infty$,  based on a general sufficient condition satisfied by several examples of activation functions. In contrast, in this paper we establish universal approximation in the stronger sense of the uniform norm $L^\infty$. 
Universal approximation with respect to the infinity norm for non-residual deep networks, allowing for  general classes of activation functions, was recently established in~\cite{PK-TL:20}. In particular, it is shown in~\cite{PK-TL:20} that under very mild conditions on the activation functions, any continuous function $ f: K \rightarrow \real^m $, where $ K \subset \real^n $ is compact, can be approximated in the infinity norm using a deep neural network of width $ n+m+2 $. Even though these results do not directly apply to residual networks (due to the presence of skip connections), they require $2n+2$ neurons for the case discussed in this paper where $n=m$. In contrast, one of our main results, Corollary~\ref{theorem:n+1}, asserts that a width of $2n+1$ is sufficient for universal approximation. \sout{Particularly, in light of the results of}~\cite{JJ:19}\sout{, this width is optimal for uniform approximation. }

To complete our literature review, we should point out the results in~\cite{park2021minimum} that were concurrently developed\footnote{A preliminary version of the results in this paper (see~\cite{PT-BG:21}) was presented at the same conference as~\cite{park2021minimum} and we learned of the work in~\cite{park2021minimum} through the list of accepted papers at this conference.} with ours. In~\cite{park2021minimum} it is established that $\max\{n+1,m\}$ neurons per layer suffice for universal approximation, although the results do not directly apply to residual networks.
\sout{When $n=m$, the case discussed in our work, both papers require $n+1$ neurons. However, there are several technical nuances across the results (for instance, some results in}~\cite{park2021minimum} \sout{require a layer of step activation functions, in addition to, e.g., ReLUs, whereas others hold even if the domain of $f$ is not compact when approximating in the $L^p$ norm) that deserve further investigation. In addition, the employed techniques are completely different, in particular, control theory is not used in}~\cite{park2021minimum}.

When $n=m$, the case discussed in our work, the results in~\cite{park2021minimum} state that $n+1$ neurons suffice. The larger number of neurons, $2n+1$, in our work is a consequence of approximating a given function by homeomorphisms rather than more general classes of maps. Nevertheless, we also identify conditions under which approximation is possible with $n$ neurons.

A shorter version of this paper was presented as~\cite{PT-BG:21} at the 2021 International Conference on Learning Representations. In addition to including complete proofs and further clarifications, we have made an attempt to provide a self-contained and pedagogical exposition accessible to the control community.

\subsection{Contributions}
We show in this paper that residual networks~\cite{ResNets} have universal approximation power when equipped with an activation function from a large class of functions. Specifically, given a finite set of points in the domain of the function to be learned, we cast the problem of designing weights for an approximating deep residual network as the problem of designing a single open-loop control input so that the solution of the control system modeling the neural network takes the finite set of points to its evaluation under the function to be approximated. In spite of the fact that we only have access to a single open-loop control input, we prove that the corresponding ensemble of control systems is controllable. Our results rely on the activation functions (or a suitable derivative) to satisfy a quadratic differential equation. Most activation functions in the literature either satisfy this condition or can be suitably approximated by functions satisfying it. We then utilize this controllability property to obtain universal approximability results for continuous functions in a uniform sense, i.e., with respect to the supremum norm. This is achieved by using the notion of monotonicity that lets us conclude uniform approximability on compact sets from controllability of finite ensembles. In doing so, we also establish that $2n+1$ neurons per layer suffice when the function to be approximated is defined of a compact subset of $\R^n$ and has codomain $\R^n$. A by-product of our work is a generalization-like bound that can be leveraged to obtain deterministic guarantee for the generalization error based on the training error.

\section{Control-theoretic view of residual networks}
\label{ControlView}
We start by providing a control system perspective on residual neural networks, following~\cite{Proposal,Haber_2017,lu2017finite}.
\subsection{From residual networks to control systems and back}
The update law for residual networks is given by:
\begin{equation}
\label{DiscreteTimeControlSystem}
x(k+1)=x(k)+S(k)\Sigma(W(k)x(k)+b(k)),
\end{equation}
where $k\in\N_0$ indexes each layer, $x(k)\in \R^n$ is the value of the $n$ neurons in layer $k$, and $(S(k),W(k),b(k))\in \R^{n\times n}\times \R^{n\times n}\times \R^n $ is the value of the weights for layer $k$. The quantities $x(k)\in \R^n$ and $(S(k),W(k),b(k))\in \R^{n\times n}\times \R^{n\times n}\times \R^n $ can be given the control theoretic interpretation of state and input when  when $k$ is viewed as indexing time. In~\eqref{DiscreteTimeControlSystem}, $S$, $W$, and $b$ are weight functions assigning weights to each time instant $k$, and $\Sigma:\R^n\to \R^n$ is of the form $\Sigma(x)=(\sigma(x_1),\sigma(x_2),\hdots,\sigma(x_n))$, where \mbox{$\sigma:\R^n\to \R^n$} is an \emph{activation function}. By drawing an analogy between~\eqref{DiscreteTimeControlSystem} and Euler's forward method to discretize differential equations, one can interpret~\eqref{DiscreteTimeControlSystem} as the time discretization of the continuous-time control system:
\begin{equation}
\label{ControlSystem}
\dot{x}(t)=S(t)\Sigma(W(t)x(t)+b(t)),
\end{equation}
where $ x(t) \in \R^n $ and $ (S(t),W(t),b(t))\in \R^{n\times n}\times \R^{n\times n}\times \R^n $; in what follows, and in order to make the presentation simpler, we sometimes drop the dependency on time. To make the connection between~\eqref{DiscreteTimeControlSystem} and~\eqref{ControlSystem} precise, let $x:[0,\tau]\to \R^n$ be a solution of the control system~\eqref{ControlSystem} for the control input $(S,W,b):[0,\tau]\to \R^{n\times n}\times \R^{n\times n}\times \R^n$, where $\tau \in \R^+$. Then, given any desired accuracy $\varepsilon\in\R^+$ and any norm $\vert \cdot\vert$ in $\R^n$, there exists a sufficiently small time step $T\in \R^+$ so that the function $z:\{0,1,\hdots, \lfloor \tau/T\rfloor\}\to \R^n$, with $ z(0)=x(0) $, defined by: 
\[
z(k+1)=z(k)+TS(kT)\Sigma(W(kT)z(k)+b(kT)),
\] 
approximates the sequence $\{x(kT)\}_{k=0,\hdots,\lfloor \tau/T\rfloor}$ with error $\varepsilon$, i.e.: 
$$\vert z(k)-x(kT)\vert\le \varepsilon,$$ for all $ k\in \{0,1,\hdots, \lfloor \tau/T\rfloor\} $. Intuitively, any statement about the solutions of~\eqref{ControlSystem} holds for the solutions of~\eqref{DiscreteTimeControlSystem} with arbitrarily small error $\varepsilon$, provided that we can choose the depth to be sufficiently large.

\subsection{Neural network training and controllability}
\label{NNTraining}
Given a function $f:\R^n \to \R^n$ and a finite set of samples $\sampleset\subset \R^n$, the problem of training a residual network so that it maps $x\in \sampleset$ to $f(x)$ can be phrased as the problem of constructing an open-loop control input $(S,W,b):[0,\tau]\to \R^{n\times n}\times \R^{n\times n}\times \R^n$ so that the resulting solution of~\eqref{ControlSystem} takes the states $x\in \sampleset$ to the states $f(x)$. The ability to approximate a function $f$ is tightly connected with a \emph{controllability problem}. This problem is, perhaps, less usual in that the same control input is to drive multiple initial states (the points in $\sampleset$) to multiple final states (the points in $f(\sampleset)$).
To make the controllability problem precise, it is convenient to consider the ensemble of $d=\vert \sampleset\vert$ copies of~\eqref{ControlSystem} given by the matrix differential equation:
\begin{equation}\label{Product} 
\begin{cases}
\dot{X}(t)=\left[ \dot{X}_{\bullet 1}(t)\vert \dot{X}_{\bullet 2}(t)\vert\hdots\vert \dot{X}_{\bullet d}(t) \right]&\\
\dot{X}_{\bullet 1}(t)=S(t)\Sigma(W(t)X_{\bullet 1}(t)+b(t))&\\
\dot{X}_{\bullet 2}(t)=S(t)\Sigma(W(t)X_{\bullet 2}(t)+b(t))&\\
\vdots&\\
\dot{X}_{\bullet d}(t)=S(t)\Sigma(W(t)X_{\bullet d}(t)+b(t)), &
\end{cases}
\end{equation}
where for  time $ t\in \R_0^+ $ the $i$th column of the matrix $X(t)\in \R^{n\times d}$, denoted by $X_{\bullet i}(t)$, is the solution of the $i$th copy of~\eqref{ControlSystem} in the ensemble. If we now  index the elements of $\sampleset$ as $\{x^1,\hdots,x^d\}$, where $ d $ is the cardinality of $ \sampleset$, and consider the matrices $\Xin=[x^1\vert x^2\vert\hdots\vert x^d]$ and $\Xf=[f(x^1)\vert f(x^2)\vert\hdots\vert f(x^d)]$, we see that the existence of a control input resulting in a solution of~\eqref{Product} starting at $\Xin$ and ending at $\Xf$, i.e., controllability of~\eqref{Product}, is equivalent to existence of an input for~\eqref{ControlSystem} so that the resulting solution starting at $x^i\in \sampleset$ ends at $f(x^i)$, for all $i\in \{1,\hdots,d\}$.

Note that achieving controllability of~\eqref{Product} is difficult, since all the copies of~\eqref{ControlSystem} in~\eqref{Product} are \emph{identical} and they all use the \emph{same input}. Therefore, to achieve controllability, we must have sufficient diversity in the initial conditions to overcome the symmetries present in~\eqref{Product}, see~\cite{CA-BG:14-acc}. Our controllability result, Theorem~\ref{Thm:Controllability}, describes precisely such diversity. As mentioned in the introduction, this observation also distinguishes the problem under study here from the classical setting of ensemble control~\cite{LJS-KN:06,UH-MS:14} where a collection of systems with \emph{different} dynamics are driven by the same control input, and instead is closely related to the recent work \cite{CC-ML-JM:19,AA-AS:20,AA-AS:20-deep}.

\section{Problem formulation}
Since the results in this paper already hold when $S(t)$ is a scalar multiple of the identity matrix, we directly simplify~\eqref{ControlSystem}, by replacing $S$ with the scalar-valued function $s$, that is:
\begin{equation}
\label{ControlSystemModel}
\dot{x}(t)=s(t)\Sigma(W(t)x(t)+b(t)),
\end{equation}
where $ x(t) \in \R^n $ and $ (s(t),W(t),b(t))\in \R\times \R^{n\times n}\times \R^n $.
In fact, we will later see
that it suffices to let $s$ assume two arbitrary values only (one positive and one negative). Moreover, for certain activation functions, we can dispense with $s$ altogether.

The function $\Sigma$ is defined by:
\[
\Sigma: x \mapsto (\sigma(x_1),\sigma(x_2),\hdots,\sigma(x_n)), 
\]
where  $\sigma:\R\to \R$ is an \emph{activation function}. We now define the class of activation functions considered in this paper.
\begin{definition}\longthmtitle{Activation functions class $ \Aquad $}\label{def:activation-class}
The activation function $\sigma:\R\to \R$ is said to be in class $ \Aquad$ if:
\begin{enumerate}
\item It satisfies a quadratic differential equation: 
\[
D\xi=a_0+a_1\xi+a_2\xi^2,
\] 
with $a_1,a_2,a_3\in \R$, $a_2\ne 0$, and $\xi=D^j\sigma$ for some $j\in\N_0$. Here, $D^j\sigma$ denotes the derivative of $\sigma$ of order $j$ and $D^0\sigma=\sigma$. 
\item The activation function $\sigma:\R\to \R$ is Lipschitz continuous, $D\sigma\ge 0$,  and $\xi=D^j\sigma$ defined above is injective.
\end{enumerate}
\end{definition}

\begin{table*}[h]
\label{Table}
\begin{center}
\begin{tabular}{c|c|c}
\textbf{Function name} & \textbf{Definition} & \textbf{Satisfied differential equation}\\\hline
Logistic function & $\sigma(x)=\frac{1}{1+e^{-x}}$ & $D\sigma-\sigma+\sigma^2=0$\\
Hyperbolic tangent & $\sigma(x)=\frac{e^x-e^{-x}}{e^x+e^{-x}}$ & $D\sigma-1+\sigma^2=0$\\
Soft plus & $\sigma(x)=\frac{1}{r}\log(1+e^{rx})$ & $D^2\sigma-rD\sigma +r (D\sigma)^2=0$\\\hline
\end{tabular}
\caption{Activation functions and the differential equations they satisfy.}
\end{center}
\end{table*}


Several activation functions used in the literature are in class $ \Aquad$ as can be seen in Table~\ref{Table}. Moreover, activation functions that are not differentiable can also be handled via approximation. For example, the ReLU function defined by $\max\{0,x\}$ can be approximated by $\sigma(x)=\log(1+e^{rx})/r$, as $r\to \infty$, which satisfies the quadratic differential equation given in Table~\ref{Table}. Similarly, the leaky ReLU, defined by $\sigma(x)=x$ for $x\ge 0$ and $\sigma(x)=r x$ for $x<0$, is the limit as $k\to \infty$ of $\alpha(x)=r x+\log(1+e^{(1-r)kx})/k$, and the function $\alpha$ satisfies $D^2\alpha-k(1+r)D\alpha+k(D\alpha)^2+k r=0$.

 The Lipschitz continuity assumption is made to simplify the presentation and can be replaced with local Lipschitz continuity, which then does not need to be assumed, since $\sigma$ is analytic in virtue of being the solution of an analytic (quadratic) differential equation. Moreover, all the activation functions in Table~\ref{Table} are Lipschitz continuous, have positive derivative and are thus injective.
 
To formally state the problem under study in this paper, we need to discuss a different point of view on the solutions of the control system~\eqref{ControlSystemModel} given by \emph{flows}. A continuously differentiable curve $x:[0,\tau]\to \R^n$ is said to be a solution of~\eqref{ControlSystemModel} under the piecewise continuous input $(s,W,b):[0,\tau]\to \R\times\R^{n\times n}\times \R^n$ if it satisfies~\eqref{ControlSystemModel}. Under the stated assumptions on $\sigma$, given a piecewise continuous input and a state $\xin\in \R^n$, there is one and at most one solution $x(t)$ of~\eqref{ControlSystemModel} satisfying $x(0)=\xin$. Moreover, solutions are defined for all $\tau\in \R_0^+$. We can thus define the flow of~\eqref{ControlSystemModel} under the input $(s,W,b)$ as the map $\phi^\tau:\R^n\to \R^n$ given by the assignment $\xin \mapsto x(\tau)$. In other words, $\phi^\tau(\xin)$ is the point reached at time $\tau$ by the unique solution starting at $\xin$ at time $0$. When the time $\tau$ is clear from context, we denote a flow simply by $\phi$. For notational purposes, we find it convenient to sometimes denote the flow defined by the solution of the differential equation $\dot{x}=Z(x)$, where $Z:\R^n\to \R^n$, by $Z^\tau$.

Before we introduced the main problem statement, we make a remark on the class of functions to be approximated.

\begin{remark}\longthmtitle{General homomorphisms} 
In Section~\ref{NNTraining}, we focused on automorphisms on $ \real^n $. This being said, in some applications we are interested in approximating an arbitrary continuous function $f:\R^n\to \R^r$ with $ r $ not necessarily equal to $ n$. Since flows have the same domain and co-domain, we can lift $f$ to a map \mbox{$\tilde{f}:  \R^k \to \R^k$}. When $n>r$, we lift $f$ to $\tilde{f}=\imath\circ f:\R^n\to \R^n$, where $\imath:\R^r\to \R^n$ is the injection given by $\imath(x)=(x_1,\hdots,x_r,0,\hdots,0)$. In this case $k=n$. When $n<r$, we lift $f$ to $\tilde{f}=f\circ \pi: \R^r\to \R^r$, where $\pi:\R^r\to \R^n$ is the projection $\pi(x_1,\hdots, x_n,x_{n+1},\hdots, x_r)=(x_1,\hdots,x_n)$. In this case $k=r$. Although we could consider factoring $f$ through a map $g:\R^n\to \R^r$, i.e., to construct $\tilde{f}:\R^n\to \R^n$ so that $f=g\circ \tilde{f}$ as done in, e.g.,~\cite{QL-TL-ZS:19}, the construction of $g$ requires a deep understanding of $f$, since a necessary condition for this factorization is $f(\R^n)\subseteq g(\R^n)$. Constructing $g$ so as to contain $f(\R^n)$ on its image requires understanding what $f(\R^n)$ is and this information is not available in learning problems. Given this discussion, in the remainder of this paper we directly assume we seek to approximate a map $f:\R^n\to \R^n$.
\end{remark}

Throughout the paper, we will investigate approximation in the sense of the $ L^\infty $ (supremum) norm, i.e., for $f:\R^n\to \R^n$, we consider:
$$\Vert f \Vert_{L^\infty(E)}=\sup_{x\in E}\vert f(x)\vert_\infty,$$
where $E\subset \R^n$ is the compact set over which is the approximation is going to be conducted and $\vert f(x)\vert_\infty=\max_{i\in\{1,\hdots,n\}}\vert f_i(x)\vert$. 

We are now ready to state the two problems we study in this paper.

\begin{problem}\longthmtitle{Memorization}\label{problem1}
Let $f: \R^n\to \R^n$ be a continuous function, $\sampleset\subset \R^n$ be a finite set, and $\varepsilon \in \R^+_0$ be the desired approximation accuracy. Does there exist a time $\tau\in \R^+$ and an input $(s,W,b):[0,\tau]\to \R\times\R^{n\times n}\times \R^n$ so that the flow $\phi^\tau:\R^n\to \R^n$ defined by the solution of~\eqref{ControlSystemModel} under the said input satisfies:
$$\Vert f- \phi^\tau\Vert_{L^\infty(\sampleset)}\le \varepsilon?$$
\end{problem}

Note that we allow $\varepsilon$ to be zero in which case the flow $\phi^\tau$ matches $f$ exactly on $\sampleset$, i.e., $f(x)=\phi^\tau(x)$ for every $x\in \sampleset$.
The next problem considers the more challenging case of approximation on compact sets and allows for residual networks with $m>n$ neurons per layer when approximating functions on $\R^n$.

\begin{problem}\longthmtitle{Universal approximation}\label{problem2}
Let $f: \R^n\to \R^n$ be a continuous function, $E\subset \R^n$ be a compact set, and $\varepsilon \in \R^+$ be the desired approximation accuracy. Does there exist  $m\in \N$, a time $\tau\in \R^+$, an injection $\alpha:\R^n\to \R^{m}$, a projection $\beta:\R^{m}\to \R^n$, and an input $(s,W,b):[0,\tau]\to \R\times\R^{m\times m}\times \R^m$ such that the flow $\phi^\tau:\R^m\to \R^m$ defined by the solution of~\eqref{ControlSystemModel}  under the said input satisfies:
$$\Vert f- \beta\circ \phi^\tau\circ\alpha\Vert_{L^\infty(E)}\le \varepsilon?$$
\end{problem}
 \begin{figure}[htb!]
   \centering 
\includegraphics{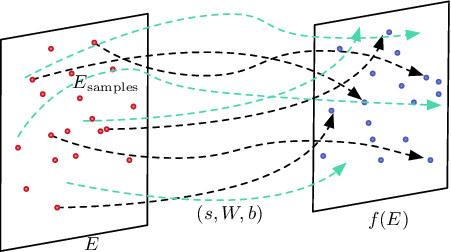}
     \caption{A schematic picture of the two problems under study. The objective in Problem~\ref{problem1} is to find a control input 
     $ (s,W,b) $ that takes all the sample points in $ \sampleset $ (shown by red dots) to their corresponding points in $ f(E) $ (shown by blue dots); some of these assignments are shown by black dashed arrows. In Problem~\ref{problem2} we are additionally concerned with the quality of the assignments of the points outside $ \sampleset $, shown by green dashed arrow, as we aim to guarantee approximation in the uniform sense.      
     }\label{fig1}
\end{figure}
Figure~\ref{fig1} provides a schematic picture of Problems~\ref{problem1} and~\ref{problem2}. In the next section, we will show the answer to both these problems to be affirmative. 

\newpage
\section{Controllability capabilities\\of deep residual networks} \label{sec:controllability}

We first discuss the problem of constructing an input for~\eqref{ControlSystemModel} so that the resulting flow $\phi$ satisfies $\phi(x)=f(x)$ for all the points $x$ in a given finite set $\sampleset\subset\R^n$. As we explained in Section~\ref{NNTraining}, this is equivalent to determining if the ensemble control system~\eqref{Product} is controllable. 

\begin{definition}\longthmtitle{Controllability}
A point $\Xf\in \R^{n\times d}$ is said to be reachable from a point $\Xin\in \R^{n\times d}$, for the control system~\eqref{Product}, if there exist $\tau\in \R^+$ and a control input $(s,W,b):[0,\tau]\to \R\times \R^{n\times n}\times \R^n$ so that the solution $X$ of~\eqref{Product} under said input satisfies $X(0)=\Xin$ and $X(\tau)=\Xf$. Control system~\eqref{Product} is said to be controllable on a submanifold $M$ of $\R^{n\times d}$ if any point in $M$ is reachable from any point in $M$.
\end{definition}

It is simple to see that controllability of~\eqref{Product} cannot hold on all of $ \R^{n\times d}  $ since, if the initial state $X(0)$ satisfies $X_{\bullet i}(0)=X_{\bullet j}(0)$ for some $i\ne j$, we must have $X_{\bullet i}(t)=X_{\bullet j}(t)$ for all $t\in [0,\tau]$ by uniqueness of solutions of differential equations.

Our first result establishes that controllability holds for the ensemble control system~\eqref{Product} on a dense and connected submanifold of $\R^{n\times d}$, independently of the (finite) number of copies $d$, as long as the activation function is in $\Aquad$.

\begin{theorem}\longthmtitle{Controllability on a submanifold under $\Aquad$}\label{Thm:Controllability}
Consider the set $N\subset \R^{n\times d}$ defined by:
\[
N=\{A\in  \R^{n\times d} \ \vert \prod_{1\le i<j\le d}(A_{\ell i}-A_{\ell j})=0, \ \ell \in\{1,\ldots, n\}\}.
\]
Let $ n>1 $ and suppose the activation function is in $\Aquad$.
Then, the ensemble control system~\eqref{Product} is controllable on the submanifold $M=\R^{n\times d}\backslash N$.
\end{theorem}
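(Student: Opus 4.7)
My plan is to apply the Chow--Rashevsky controllability theorem in its symmetric form: it suffices to verify (i) that $M$ is connected, (ii) that the family of admissible vector fields is symmetric, and (iii) that the Lie algebra they generate spans the tangent space at every $X\in M$. Condition (ii) is immediate since $s$ ranges over $\R$, so each available vector field $V$ comes paired with $-V$. Condition (i) uses $n\geq 2$: each defining equation of $N$ imposes the vanishing of one factor in \emph{each} of the $n$ rows, making $N$ a finite union of real algebraic subvarieties of codimension at least $n\geq 2$, whose complement in $\R^{n\times d}$ is path-connected.

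The heart of the proof is condition (iii). For each $(W,b)\in \R^{n\times n}\times \R^n$, let $V_{W,b}$ denote the ensemble vector field on $\R^{n\times d}$ whose $i$-th column is $\Sigma(W X_{\bullet i}+b)$. A direct computation yields
\[
[V_{W_1,b_1},V_{W_2,b_2}]_{\bullet i}\;=\;\mathrm{diag}(D\Sigma(W_2 X_{\bullet i}+b_2))\,W_2\,\Sigma(W_1 X_{\bullet i}+b_1)\;-\;(1\leftrightarrow 2),
\]
so the $i$-th column of any iterated bracket depends only on $X_{\bullet i}$. The quadratic differential equation satisfied by $\xi=D^j\sigma$ is the essential input: every derivative arising from successive brackets can be replaced by a polynomial in $\sigma, D\sigma,\ldots, D^j\sigma$, so iterated brackets remain inside a concrete polynomial algebra of ``copy-diagonal'' vector fields, built from $\sigma$ and its first $j$ derivatives evaluated at affine functions of $X_{\bullet i}$.

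To show spanning, I fix $X\in M$ and select a row $\ell$ whose entries $a_1,\ldots,a_d$ are pairwise distinct. Taking $W=u\,e_\ell^{\top}$ with $u\in\R^n$, one has $W X_{\bullet i}=a_i u$, so $\Sigma(WX_{\bullet i}+b)$ depends on $X_{\bullet i}$ only through the scalar $a_i$. By varying $u$ and $b$ and iterating brackets, I produce vector fields whose $i$-th column takes the form $\phi(a_i)\,e_k$ for any $k\in\{1,\ldots,n\}$ and for a rich family of scalar functions $\phi$ drawn from the polynomial algebra generated by $\xi$. Since $\xi$ is injective and $a_1,\ldots,a_d$ are distinct, the values $\xi(a_1+c),\ldots,\xi(a_d+c)$ are distinct for every shift $c$, and their polynomial combinations form a Vandermonde-type system that is nonsingular. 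Suitable linear combinations of the generated Lie-algebra elements then produce, for each pair $(i,k)$, a tangent vector whose only nonzero entry sits in position $(k,i)$. This yields $nd$ linearly independent elements spanning $T_X(\R^{n\times d})$, and Chow's theorem delivers controllability on $M$.

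The principal obstacle is engineering this ``copy-isolation'' step: extracting, from the fully symmetric ensemble dynamics, Lie-algebra elements whose only nonzero entries lie in a prescribed column $i$ and row $k$. This is where the three hypotheses pull their weight in concert --- the quadratic term $a_2\neq 0$ guarantees that iterating brackets generates genuinely new polynomials in $\xi$ rather than collapsing the algebra; the injectivity of $\xi$ provides the Vandermonde-type nondegeneracy needed to separate the $d$ distinct scalars $a_i$; and the freedom in choosing $W$ and $b$ supplies the independent output directions needed to target each coordinate. Once these ingredients are combined the spanning follows, and the remainder of the Chow argument is routine.
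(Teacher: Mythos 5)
Your proposal follows essentially the same route as the paper: a symmetric family of constant-control vector fields, connectedness of $M$ for $n>1$, and the Lie algebra rank condition obtained from iterated brackets that produce $D^m\sigma$ evaluated along a row of $X$ with pairwise distinct entries, with nondegeneracy coming from the quadratic ODE ($a_2\neq 0$) and injectivity of $\xi$. The ``Vandermonde-type'' nonsingularity you assert is precisely the paper's Lemma~\ref{LemmaDet} (row reduction using that $D^m\xi$ is a polynomial of degree $m+1$ in $\xi$ with leading coefficient $m!\,a_2^m$), so a complete write-up would still need that determinant computation; your codimension-$\geq 2$ argument for connectedness is a valid shortcut relative to the explicit path construction in Proposition~\ref{Prop:Connected}.
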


It is worth mentioning that the assumption of $ n\neq 1 $ ensures connectedness of the submanifold $ M $, which we rely on to obtain controllability. 

The proof of Theorem~\ref{Thm:Controllability} uses several key ideas from geometric control that we now review. A collection of vector fields $\mathcal{F}=\{Z_1,\hdots,Z_k\}$ on a manifold $M$ is said to be controllable if given $\xin,\xf\in M$, there exists a finite sequence of times $0<t_1<t_1+t_2<\hdots<t_1+\hdots+t_q$ so that:
$$Z_\ell^{t_q}\circ\hdots\circ Z_2^{t_2}\circ  Z_1^{t_1}(\xin)=\xf,$$
where $Z_i\in\mathcal{F}$ and $Z_i^t$ is the flow of $Z_i$. When the vector fields $Z_i$ are smooth, $M$ is smooth and connected, and the collection $\mathcal{F}$ satisfies:
$$Z\in \mathcal{F}\implies \alpha Z\in \mathcal{F}\text{ for some }\alpha<0,$$
then $\mathcal{F}$ is controllable provided the evaluation of the Lie algebra generated by $\mathcal{F}$ at every point $x\in M$ has the same dimension as $M$, see, e.g.,~\cite{jurdjevic_1996}\footnote{In this footnote we provide additional details relating controllability of a family of vector fields to the Lie algebra rank condition. Let us denote by $\mathcal{A}_\mathcal{F}(x)$ the reachable set of the family of smooth vector fields $\mathcal{F}$ from $x\in M$, i.e., the set of all points $\xf\in \R^n$ of the form:
$$\xf=Z_\ell^{t_q}\circ\hdots\circ Z_2^{t_2}\circ  Z_1^{t_1}(x),$$
for $Z_i\in\mathcal{F}$ and $0<t_1<t_1+t_2<\hdots<t_1+\hdots+t_q,$,
and denote by $Lie_x(\mathcal{F})$ the evaluation of the Lie algebra generated by $\mathcal{F}$ at $x\in M$. By $\mathcal{F}'$ we denote the family of vector fields of the form $\sum_i \lambda_i X_i$ with $X_i\in \mathcal{F}$ and $\lambda_i\ge 0$. Since $\mathcal{F}\subseteq \mathcal{F'}$ we have $\mathcal{A}_\mathcal{F}(x)\subseteq \mathcal{A}_\mathcal{F'}(x)$. By Theorem 8 in Chapter 3 of~\cite{jurdjevic_1996} we have that:
$$\mathcal{A}_\mathcal{F}(x)\subseteq \mathcal{A}_\mathcal{F'}(x)\subseteq \cl (\mathcal{A}_\mathcal{F}(x)),$$
where $\cl$ denotes topological closure. Moreover, by Theorem 2 in Chapter 3 of~\cite{jurdjevic_1996}, if $Lie_x(\mathcal{F})=T_xM$ for every $x\in M$, then $\mathrm{int}(\cl(\mathcal{A}_\mathcal{F}(x)))=\mathrm{int}(\mathcal{A}_\mathcal{F}(x))$. We thus obtain:
$$\mathrm{int}(\mathcal{A}_\mathcal{F}(x))\subseteq \mathrm{int}(\mathcal{A}_\mathcal{F'}(x))\subseteq \mathrm{int}(\cl (\mathcal{A}_\mathcal{F}(x)))=\mathrm{int} (\mathcal{A}_\mathcal{F}(x)).$$
But if $\mathcal{F'}$ is controllable, $\mathrm{int} (\mathcal{A}_\mathcal{F}'(x))=M$ and thus $\mathcal{F}$ is also controllable. Therefore, we now focus on determining if $\mathcal{F}'$ is controllable. Provided that for each $X\in \mathcal{F}$ there exists $X'\in \mathcal{F}$ satisfying $X=\sigma X'$ with $\sigma<0$ (this is weaker than symmetry, symmetry is this property for $\sigma=-1$), $\mathcal{F}'$ is simply the vector space spanned by $\mathcal{F}$. Moreover, since the control system $\dot{x}=\sum_i X_i u_i$ with $X_i\in \mathcal{F}$ and $u_i\in \R$ generates the same family of vector fields as $\mathcal{F}'$, we conclude that we can instead study the reachable set of $\dot{x}=\sum_i X_i u_i$ with $X_i\in \mathcal{F}$ which is driftless. By Theorem 2 in Chapter 4,  in~\cite{jurdjevic_1996} the control system $\dot{x}=\sum_i X_i u_i$ is controllable provided that $Lie_x(\mathcal{F'})=T_x M$  for every $x\in M$.}. Recall that the Lie algebra generated by $\mathcal{F}$, and denoted by $Lie(\mathcal{F})$, is the smallest vector space of vector fields on $M$ containing $\mathcal{F}$ and closed under the Lie bracket. By evaluation of $Lie(\mathcal{F})$ at $x\in M$, we mean the finite-dimensional vector subspace of the tangent space of $M$ at $x$ that is obtained by evaluating every vector field in $Lie(\mathcal{F})$ at $x$. The proof consists in establishing controllability by determining the points at which $Lie(\mathcal{F})$ has the right dimension for a collection of vector fields $\mathcal{F}$ induced by the ensemble control system~\eqref{Product}. Theorem~\ref{Thm:Controllability} can now be proved by making use of these concepts.

\begin{proof}Consider the control system given in~\eqref{Product}. We prove that under the mentioned assumptions, there is a choice of the control inputs $(s,W,b)$ that renders~\eqref{Product} controllable in $ M $.  
It will be sufficient to work with inputs that are piecewise constant, and we can further simplify the analysis by choosing the family of inputs $(s,W,b)$ given by~\eqref{Inputs1} and~\eqref{Inputs2}, where:
\begin{itemize}
\item the first class of inputs is given by:
\begin{equation}
\label{Inputs1}
(\pm 1,0,c e_j),
\end{equation}
where $ j\in \{1,2,\hdots,n\} $ and $c\in \R$ is any value such that $\sigma(c)\ne 0$ and $e_j\in \R^n$ has zeros in all its entries except for a $1$ on its $j$th entry;
\item the second class of inputs is given by:
\begin{equation}
\label{Inputs2}
(\pm 1, E_{jk},0),
\end{equation}
where  $ j,k \in\{1,2,\hdots,n\} $ and $E_{ij}$ is the $n\times n$ matrix that has zeros in all its entries except for a $1$ in its $j$th row and $k$th column.
\end{itemize}
Once we substitute these inputs into the right hand side of the ensemble control system~\eqref{Product}, we obtain a family of vector fields on $\R^{n\times d}$. More specifically, the vector fields arising from the inputs~\eqref{Inputs1}, denoted by 
$\{X^{\pm}_j\}_{j\in \{1,\hdots,n\}}$, are given by:
\begin{equation}
\label{VFX}
X_j^+=\sigma(c)\sum_{i=1}^d \pder{}{A_{ji}} \quad \mathrm{and} \quad X_j^-=- X_j^+.
\end{equation}
Similarly, the vector fields arising from  the inputs~\eqref{Inputs2}, denoted by $\{Y^{\pm}_{j,k}\}_{j,k\in\{ 1,\hdots,n\}}$, are given by:
\begin{equation}
\label{VFY}
Y_{jk}^+=\sum_{i=1}^d\sigma\left(A_{ki}\right)\frac{\partial }{\partial A_{ji}}  \quad \mathrm{and} \quad Y_{jk}^-=-Y_{jk}^+.
\end{equation}
This definition abuses notation, since defining a vector field on $\R^{n\times d}$ requires one summation over $i$ and one over $j$. However, summation over $j$, i.e., summation over rows, only produces non-zero terms for one row, that we decided to index by $j$.

We make the observation that, since $\sigma(c)\ne 0$, we can simplify the vector fields $X_j^\pm$ to:
\[
X_j^+=\sum_{i=1}^d \pder{}{A_{ji}} \quad \mathrm{and} \quad X_j^-=-X_j^+,
\]
without altering controllability. This follows from the observation that for any vector field $X$ with flow $X^t$ we have $X^{\alpha \tau}=(\alpha X)^\tau$ for any $\alpha\in \R$.

By Proposition~\ref{Prop:Connected}, $ M $ is a connected smooth submanifold of $ \R^{n\times d} $. The remainder of the proof consists of showing that the family of vector fields $\mathcal{F}=\{X^{\pm}_j,Y^{\pm}_{jk}\}_{j,k\in \{1,\hdots,n\}}$, restricted to $M$, is controllable on $M$. As discussed prior to this proof, since these vector fields in $\mathcal{F}$ are smooth and satisfy $Z\in \mathcal{F}\implies -Z\in\mathcal{F}$, it suffices to establish that $\dim (Lie_A(\mathcal{F}))=\dim (M)=nd$ for every $A\in M$ and where $ Lie_A(\mathcal{F})$ denotes the evaluation at $A$ of the Lie algebra generated by $\mathcal{F}$.

We generate $Lie(\mathcal{F})$ by iteratively computing Lie brackets. For two vector fields $X $ and $Y $ on $ \R^{n\times d} $, we use the notation $\text{ad}_X Y=[X,Y]$ and $\text{ad}_X^{\ell+1} Y=[X,\text{ad}_X^{\ell} Y]$ where $[X,Y]$ denotes the Lie bracket between $X$ and $Y$. For our purpose, it is enough to compute $ \text{ad}^\ell_{X_k^\pm} {Y_{jk}^\pm} $ and, given the implication $Z\in\mathcal{F}\implies -Z\in\mathcal{F}$, it suffices to compute:
\begin{equation}
\label{LieBracket}
(\text{ad}^\ell_{X_k^+} {Y_{jk}^+})(A)=\sum_{i=1}^d D^{\ell}\sigma(A_{ki})\frac{\partial }{\partial A_{ji}}.
\end{equation}

In order to show that $\dim(Lie_A(\mathcal{F}))=\dim(M)$ at every $A\in M$, we find it convenient to work with the vectorization of elements of $ \R^{n\times d} $. In particular, we associate the vector $ \vect(A) \in \R^{nd} $ to each matrix $ A \in \R^{n\times d} $ where the entry $ (i,j) $ of $A$ is identified with the entry $ d^{i-1}+j $ of $ \vect(A) $. For a collection of matrices $\{A_1,\ldots, A_k\} $, we denote by $ \vect\{A_1,\ldots, A_k\}$ the collection of vectors $\vect\{A_1,\ldots, A_k\}=\{ \vect(A_1),\ldots,  \vect(A_k)\} $.

Consider now the indexed collection of vector fields $\mathcal{S}=\{Z_\ell\}_{\ell\in\{1,\hdots,n^2(d-1)\}}$ where:
\begin{align*}
&Z_{1+(j-1)(n^2+1) }=\vect(X_j),\\
&Z_{1+i+kn+(j-1)(n^2+1)}=\vect(\text{ad}_{X_{j}}^k Y_{ji}).
\end{align*}
We note that every $Z\in \mathcal{S}$ belongs to $Lie(\mathcal{F})$ since the vector fields in $\mathcal{S}$ either belong to $\mathcal{F}$ or are obtained by computing Lie brackets between elements of $\mathcal{F}$ and elements of $\mathcal{S}$. Moreover, we claim the evaluation of the vector fields in $\mathcal{S}$ at every $A\in M$ results in $nd$ linearly independent vectors. To establish this claim, we form the matrix:
\begin{align*}
G(&\vect(A))=\\
&\left[ Z_1(\vect(A))\vert Z_2(\vect(A))\vert \hdots\vert Z_{n^2(d-1)}(\vect(A)) \right],
\end{align*}
and note that a simple but tedious computation, using~\eqref{LieBracket}, shows that $G$ is a block diagonal matrix with $d$ blocks, as shown in~\eqref{eq:aux}. Here, we assume that $ d\geq n $, i.e., the number of samples is larger than the dimension of the domain of the function to be approximated. Note that if $ d<n $, then the set of available controls are overparameterized and hence one can select inputs to generate any direction in $T_xM$ without relying on higher order Lie brackets.

\begin{figure*}
\begin{align}\label{eq:aux}
&\subscr{G}{blk}(\vect(A))=\cr
&\begin{bmatrix}
1 & \sigma(A_{11}) & \cdots & \sigma(A_{1n}) &D\sigma( A_{11} ) & \cdots & D\sigma( A_{1n} ) & D^{d-2}\sigma( A_{11} ) &\cdots& D^{d-2}(A_{1n})\cr
1 & \sigma(A_{21}) & \cdots & \sigma(A_{2n}) &D\sigma( A_{21} ) & \cdots & D\sigma( A_{2n} ) & D^{d-2}\sigma( A_{21} ) &\cdots& D^{d-2}(A_{2n})\cr
\vdots &\vdots && \vdots  & \vdots&&\vdots&\vdots&&\vdots \cr
1 & \sigma(A_{n1}) & \cdots & \sigma(A_{nn}) &D\sigma( A_{n1} ) & \cdots & D\sigma( A_{nn} ) & D^{d-2}\sigma( A_{n1} ) &\cdots& D^{d-2}(A_{nn}) 
\end{bmatrix}.    
\end{align}
\end{figure*}
To finish the proof, it suffices to show that $\subscr{G}{blk}$ has rank $n$ (since it has $n$ rows) and this is accomplished by showing there is a choice of $n$ columns that are linearly independent. Since $A\in M$ implies $A\notin N$, by definition, there exists $\ell\in \{1,\hdots,n\}$ such that:
$$\prod_{1\le i<j\le d}(A_{\ell i}-A_{\ell j})\ne 0.$$
Moreover, by our assumption on injectivity of $\sigma$, we conclude that:
$$\prod_{1\le i<j\le d}(\sigma(A_{\ell i})-\sigma(A_{\ell j}))\ne 0,$$
and it follows from Lemma~\ref{LemmaDet} that the matrix:
\begin{equation}
\begin{bmatrix}
\label{ForBahman}
1 & \sigma(A_{1\ell}) &D\sigma( A_{1\ell} ) & \cdots  & D^{d-2}\sigma( A_{1\ell} )\\
1 & \sigma(A_{2\ell}) &D\sigma( A_{2\ell} ) & \cdots  & D^{d-2}\sigma( A_{2\ell} )\\
\vdots &\vdots && \vdots \\
1 & \sigma(A_{n\ell}) &D\sigma( A_{n\ell} ) & \cdots  & D^{d-2}\sigma( A_{n\ell} )\\
\end{bmatrix},
\end{equation}
has rank $n$, i.e., for every $A\in M$ there exists $n$ columns of $\subscr{G}{blk}(\vect(A))$ that are linearly independent. The proof is then complete by noting that for $n>1$, $M$ is connected, as asserted by Proposition~\ref{Prop:Connected}.
\end{proof}

The preceding proof used the controllability properties of the vector fields~\eqref{VFX} and~\eqref{VFY};  upon a closer look, the reader can observe that it suffices for $s$ to take values in the set $\{-1,1\}$ (or any set with two elements, one being positive and one being negative), for $W$ to take values on $\{1,0\}$ (or any other set $\{0,c\}$ with $c\ne 0$) and for $b$ to take values on $\{0,d\}$ for some $d\in \R$ such that $\sigma(d)\ne 0$. Moreover, when the activation function is and odd function, i.e., $\sigma(-x)=-\sigma(x)$, as is the case for the hyperbolic tangent, the conclusions of Theorem~\ref{Thm:Controllability}  hold for the simpler version of~\eqref{ControlSystemModel}, where we fix $s$ to be $1$. 
Taking these observations one step further, one can establish controllability of an alternative network architecture defined by:
$$\dot{x}=S\Sigma(x)+b,$$
where the $n\times n$ matrix $S$ and the $n$ vector $b$ only need to assume values in a set of the form $\{c^-,0,c^+\}$ where $c^-\in \R^-$ and $c^+\in \R^+$.

The following corollary of Theorem~\ref{Thm:Controllability} weakens controllability to reachability but applies to a larger set.
 
 \begin{corollary}\longthmtitle{Reachability on a submanifold under $\Aquad$}
\label{Cor:Reachability}
Let $M\subset \R^{n\times d}$ be the submanifold defined in Theorem~\ref{Thm:Controllability}. Under assumptions of Theorem~\ref{Thm:Controllability}, any point in $M$ is reachable from a point $A\in \R^{n\times d}$ for which:
\[
A_{\bullet i}\ne A_{\bullet j},
\]
holds for all $ i\ne j $, where $ i,j\in\{1,\hdots,d \}$.
\end{corollary}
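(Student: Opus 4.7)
The point $A$ has pairwise distinct columns but may still lie in the excluded set $N$, since two entries in a single row of $A$ could coincide even though no two full columns do. My plan is therefore to split reachability into two stages: first I would steer $A$ into the controllable submanifold $M=\R^{n\times d}\setminus N$ in some small time $t^\star>0$ using a carefully chosen constant input, and then invoke Theorem~\ref{Thm:Controllability} from the resulting point $X(t^\star)\in M$ to the prescribed target in $M$. Concatenating the two piecewise continuous control inputs yields reachability from $A$.

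The core of the argument is the first stage. Let $v_{ij}=A_{\bullet i}-A_{\bullet j}$ for $i\ne j$; by hypothesis each $v_{ij}$ is a nonzero vector in $\R^n$. I would pick $s\ne 0$ and $b\in\R^n$ arbitrarily, and choose the rows of $W$ so that
\[
W_{\ell\bullet}\,v_{ij}\ne 0\qquad\text{for every } \ell\in\{1,\dots,n\} \text{ and every pair } i\ne j.
\]
Such $W$ exists because for each row $\ell$ the forbidden set is a finite union of hyperplanes in $\R^n$ and therefore has empty interior. Applying this constant input to~\eqref{Product} produces a trajectory $X(t)$ that is analytic in $t$, since $\sigma$ (being a finite iterated antiderivative of the solution of an analytic polynomial ODE by Assumption~\ref{assumptions}) is itself analytic. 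For any triple $(\ell,i,j)$ with $A_{\ell i}=A_{\ell j}$,
\[
\frac{d}{dt}\bigl(X_{\ell i}(t)-X_{\ell j}(t)\bigr)\Big|_{t=0}=s\,\bigl(\sigma(W_{\ell\bullet}A_{\bullet i}+b_\ell)-\sigma(W_{\ell\bullet}A_{\bullet j}+b_\ell)\bigr),
\]
which is nonzero because $W_{\ell\bullet}v_{ij}\ne 0$ and $\sigma$ is injective by Assumption~\ref{assumptions}. Hence the analytic function $t\mapsto X_{\ell i}(t)-X_{\ell j}(t)$ has an isolated zero at $t=0$.

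Consequently, for every triple $(\ell,i,j)$ with $A_{\ell i}=A_{\ell j}$ there exists $\delta_{\ell i j}>0$ on which $X_{\ell i}(t)\ne X_{\ell j}(t)$, and for every triple with $A_{\ell i}\ne A_{\ell j}$ an analogous $\delta_{\ell i j}>0$ is furnished by continuity of the flow. Taking $t^\star$ smaller than the finite minimum of these $\delta_{\ell i j}$ places $X(t^\star)$ in $M$, after which Theorem~\ref{Thm:Controllability} completes the construction. The only delicate point will be certifying $X(t^\star)\in M$ rather than merely near $M$: the genericity of $W$, together with the injectivity and analyticity of $\sigma$, buys this for the rows that start with a collision, while plain continuity suffices for the rest. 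I expect the full write-up to be essentially this argument, expanded only to make the choice of the generic hyperplane-avoiding $W$ explicit.
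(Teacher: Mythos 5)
Your proposal is correct, and its first stage takes a genuinely different route from the paper's. The paper reduces Corollary~\ref{Cor:Reachability} to Theorem~\ref{Thm:Controllability} in the same way you do (steer $A$ into $M$, then invoke controllability), but its steering step is iterative and uses very specific inputs: assuming, say, $A_{11}=A_{12}$, it picks a row $k$ with $A_{k1}\ne A_{k2}$ (possible because the columns are distinct), applies $s=1$, $b=0$, $W=E_{1k}$ so that essentially only the first row moves, uses injectivity of $\sigma$ to separate the colliding pair while a small-time argument keeps the already distinct first-row entries distinct, and repeats pair by pair until the whole first row has pairwise distinct entries --- which already suffices for membership in $M$, since $M$ only requires one fully separated row. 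You instead choose a single generic constant $W$ whose rows avoid the finitely many hyperplanes $\{w\,:\,w\cdot(A_{\bullet i}-A_{\bullet j})=0\}$ and separate all colliding entries in all rows simultaneously in one arbitrarily short burst, via the nonvanishing derivative of $X_{\ell i}-X_{\ell j}$ at $t=0$ (analyticity is not even needed here: a nonzero first derivative already isolates the zero, and continuity handles the pairs that start distinct). Your route is shorter and lands in the smaller set where every row is separated; the paper's route, though more laborious, uses only weights with entries in $\{0,1\}$ (and $s=1$, $b=0$), which is what underpins its later remark that two weight values suffice. One small caveat: Assumption~\ref{assumptions} only requires $\xi=D^j\sigma$ to be injective, not $\sigma$ itself, so your appeal to injectivity of $\sigma$ is strictly a simplification --- but it is exactly the simplification the paper announces at the start of its appendix and uses in its own proof of this corollary, so it creates no gap relative to the paper.
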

\begin{proof}The result follows from Theorem~\ref{Thm:Controllability} once we establish the existence of a solution of~\eqref{Product} taking $\Xin$ to some point $\Xf\in M$. This is because Theorem~\ref{Thm:Controllability} states that any other point in $M$ will then be reachable. We proceed by showing the existence of a solution taking $\Xin$ to a point $\Xf$ satisfying $\Xf_{1i}\ne \Xf_{1j}$ for all $i\ne j$, $i,j\in\{1,\hdots,d\}$. Clearly, $\Xf\in M$.

Assume, without loss of generality, that $\Xin_{11}=\Xin_{12}$. We will design an input, for a duration $\tau>0$, that will result in a solution $X(t)$ with $X_{11}(\tau)\ne X_{12}(\tau)$, while ensuring that if $\Xin_{1i}$ is different from $\Xin_{1j}$ then $X_{1i}(\tau)$ is different from $X_{1j}(\tau)$.

By assumption, $\Xin_{\bullet 1}\ne \Xin_{\bullet 2}$. Hence, there must exist $k\in\{1,\hdots,n\}$ so that $\Xin_{k 1}\ne \Xin_{k 2}$. We use $k$ to define the input $s=1$, $b=0$, and the matrix $W$ all of whose entries are zero except for $W_{1k}$ that is equal to $1$. This choice of input results in the solution:
$$X(t)=\Xin+t\begin{bmatrix}
\sigma(\Xin_{k1}) & \sigma(\Xin_{k2}) & \hdots & \sigma(\Xin_{kd})\\
0 & 0 & \hdots & 0\\
\vdots & \vdots & & \vdots\\
0 & 0 & \hdots & 0
\end{bmatrix}.$$
We note that 
\[
\frac{d}{dt}\big\vert_{t=0}(X_{11}(t)-X_{12}(t))=\sigma(\Xin_{k1})-\sigma(\Xin_{k2})\ne 0,
\] 
since $\sigma$ is injective. Therefore, there exists $\tau_1\in \R^+$ such that $X_{11}(t)-X_{12}(t)\ne 0$ for all $t\in ]0,\tau_1]$, i.e., $X_{11}(t)\ne X_{12}(t)$ for all $t\in ]0,\tau_1]$. Moreover, we now show existence of $\tau_2$ so that for all $t\in [0,\tau_2]$ we have   $X_{1i}(t)\ne X_{2j}(t)$ whenever $X_{1i}(0)=\Xin_{1i}\ne \Xin_{2j}=X_{2j}(0)$. For a particular pair $(X_{1i},X_{2j})$ for which $\Xin_{1i}\ne \Xin_{2j}$, the equality $\Xin_{1i}+t\sigma(\Xin_{ki}(0))=\Xin_{1j}+t\sigma(\Xin_{kj}(0))$ defines the intersection of two lines. If they intersect for positive $t$, say $t_2$, it suffices to choose $\tau_2$ smaller $t_2$. Moreover, by choosing $\tau_2$ to be smaller than the positive intersection points for all pairs of lines corresponding to all pairs $(X_{1i},X_{2j})$ for which $\Xin_{1i}\ne \Xin_{2j}$, we conclude that for all $t\in [0,\tau_2]$, $X_{1i}(0)=\Xin_{1i}\ne \Xin_{2j}=X_{2j}(0)$ implies $X_{1i}(t)\ne X_{2j}(t)$. Let now $\tau=\min\{\tau_1,\tau_2\}$. The point $X(\tau)$ satisfies the two properties we set to achive: 1) $X_{11}(\tau)\ne X_{12}(\tau)$; and 2) $X_{1i}(\tau)\ne X_{1j}(\tau)$ if $\Xin_{1i}\ne \Xin_{1j}$. 

By noticing that $\Xin_{ij}=X_{ij}(\tau)$ for $i>1$ and any $j\in\{1,\hdots, d\}$, we can repeat this process iteratively to force all the entires of the first row of $X$ to become different, the same way we forced the first two. 
\end{proof}

The assumption $A_{\bullet i}\ne A_{\bullet j}$ in Corollary~\ref{Cor:Reachability} requires all the columns of $A$ to be different and is always satisfied when $A=\left[x^1\vert x^2\vert\hdots\vert x^d\right]$, $x^i\in \sampleset$. Hence, for any finite set $\sampleset$ there exists a flow $\phi$ of~\eqref{ControlSystemModel} satisfying $f(x)=\phi(x)$ for all $x\in \sampleset$ provided that $f(\sampleset)\subset M$, i.e., Problem~\ref{problem1} is solved with $\varepsilon=0$. Moreover, since $M$ is dense in $\R^{n\times d}$, when $f(\sampleset)\subset M$ fails, there still exists a flow $\phi$ of~\eqref{ControlSystemModel} taking $\phi(x)$ arbitrarily close to $f(x)$ for all $x\in \sampleset$, i.e., Problem~\ref{problem1} is solved for any $\varepsilon>0$. This result also sheds light on the memorization capacity of residual networks as it states that almost any finite set of samples can be memorized, independently of its cardinality. See, e.g.,~\cite{DBLP:conf/nips/YunSJ19,vershynin2020memory}, for recent results on this problem that do not rely on differential equation models.

Some further remarks are in order. Theorem~\ref{Thm:Controllability} and Corollary~\ref{Cor:Reachability} do not directly apply to the ReLU activation function, defined by $\max\{0,x\}$, since this function is not differentiable. However, the ReLU is approximated by the activation function:
$$\frac{1}{r}\log(1+e^{r x}),$$
as $r\to\infty$. In particular, as $r\to\infty$ the ensemble control system~\eqref{Product} with $\sigma(x)=\log(1+e^{r x})/r$ converges to the ensemble control system~\eqref{Product} with $\sigma(x)=\max\{0,x\}$ and thus the solutions of the latter are arbitrarily close to the solutions of the former whenever $r$ is large enough. Moreover, $\xi=D\sigma$ satisfies $D\xi=r\xi-r\xi^2$ and $D\xi=re^{rx}/(1+e^{rx})^2>0$ for $x\in \R$ and $r>0$ thus showing that $\xi$ is an increasing function and, consequently, injective.


\section{Approximation capabilities of ResNets}\label{Sec:FunctionApproximation}

In order to extend the approximation guarantees from a finite set $\sampleset\subset\R^n$ to an arbitrary compact set $E\subset \R^n$, we rely on the notion of monotonicity. On $\R^n$ we consider the ordering relation $x\preceq x'$ defined by $x_i\le x_i'$ for all $i\in\{1,\hdots,n\}$ and $x,x'\in \R^n$. A map $f:\R^n\to \R^n$ is said to be monotone when it respects this ordering relation, i.e., when $x\preceq x'$ implies $f(x)\preceq f(x')$. When $f$ is continuously differentiable, monotonicity admits a simple characterization~\cite{hirsch2006monotone}:
\begin{equation}
\label{ConditionMonotoneMap}
\frac{\partial f_i}{\partial x_j}\ge 0,\quad \forall i,j\in\{1,\hdots,n\}.
\end{equation}
A vector field $Z:\R^n\to \R^n$ is said to be monotone when its flow $\phi^\tau:\R^n\to \R^n$ is a monotone map. Monotone vector fields admit a characterization similar to~\eqref{ConditionMonotoneMap}, see~\cite{Book:Monotone}:
\begin{equation}
\label{ConditionMonotoneVF}
\frac{\partial Z_i}{\partial x_j}\ge 0,\quad \forall i,j\in\{1,\hdots,n\}, i\ne j.
\end{equation}

\subsection{Universal approximation results}

Our first result shows that when the activation functions belong to the class $\Aquad$, we can uniformly approximate functions that can be connected to the identity function through an analytic and monotone homotopy, i.e., for which there exists an analytic function $h:\R^n\times [0,1]\to\R^n$ satisfying $h(x,0)=x$, $h(x,1)=f(x)$, and $h_\tau(x)\preceq h_\tau(x')$ for some $\tau\in [0,1]$ implies $h_{t}(x)\preceq h_{t}(x')$ for all $t\in [\tau,1]$. Note this implies, for $t=1$, that $h_1=f$ is monotone and analytic.

\begin{theorem}\longthmtitle{Universal approximation of monotone analytic functions homotopic to the identity under $\Aquad$}
\label{Thm:MonotoneFlow}
Let $ n>1 $ and suppose the activation function is in $\Aquad$.
Then, for every function $f:\R^n\to \R^n$ that can be connected to the identity function through an analytic and monotone homotopy, for every compact set $E\subset \R^n$, and for every $\varepsilon \in \R^+$ there exist a time $\tau\in \R^+$ and an input $(s,W,b):[0,\tau]\to \R\times\R^{n\times n}\times \R^n$ so that the flow $\phi^\tau:\R^n\to \R^n$ defined by the solution of~\eqref{ControlSystemModel} with state space $\R^n$ under the said input satisfies:
\begin{equation}
\label{Guarantee}
\Vert f- \phi^\tau\Vert_{L^\infty(E)}\le \varepsilon .
\end{equation}
\end{theorem}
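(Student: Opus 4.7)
The strategy is to use monotonicity to reduce uniform approximation on the compact set $E$ to interpolation on a finite grid, which is then handled by Corollary~\ref{Cor:Reachability}. Fix $\delta>0$ so that, by uniform continuity of $f$ on $E$, $|x-y|_\infty<\delta$ implies $|f(x)-f(y)|_\infty\le \varepsilon/3$. Enclose $E$ in a closed box and lay down a uniform rectangular lattice $G$ of mesh at most $\delta$, enlarging slightly if necessary so that every $x\in E$ admits bracketing points $x^-,x^+\in G$ with $x^-\preceq x\preceq x^+$ and $|x^+-x^-|_\infty<\delta$. Writing $G=\{x^1,\ldots,x^d\}$ and $\Xin=[x^1\vert\cdots\vert x^d]$, the columns of $\Xin$ are pairwise distinct, so the hypothesis of Corollary~\ref{Cor:Reachability} is satisfied.

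Next, I would construct an input for~\eqref{ControlSystemModel} whose flow $\phi^\tau$ has two properties: (i)~$|\phi^\tau(x^i)-f(x^i)|_\infty\le \varepsilon/3$ for every $x^i\in G$, and (ii)~$\phi^\tau$ is monotone on $\R^n$, i.e.\ $y\preceq y'\Rightarrow \phi^\tau(y)\preceq\phi^\tau(y')$. Property (i) follows from Corollary~\ref{Cor:Reachability} applied to any target matrix within $\varepsilon/3$ of $[f(x^1)\vert\cdots\vert f(x^d)]$ lying in the dense submanifold $M$. For property (ii), observe that since $D\sigma\ge 0$ by Assumption~\ref{assumptions}, each instantaneous vector field $Z(x)=s\sigma(Wx+b)$ satisfies the Kamke condition~\eqref{ConditionMonotoneVF} precisely when $sW_{ij}\ge 0$ for all $i\ne j$; restricting each piecewise-constant piece of the input to this sign-constrained (monotone) cone makes every instantaneous flow monotone, hence also the composition that defines $\phi^\tau$. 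Since the map $x^i\mapsto f(x^i)$ is itself order preserving (being the restriction of the monotone function $f$), the required target for Corollary~\ref{Cor:Reachability} is compatible with this cone structure.

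With such a $\phi^\tau$ in hand, a sandwiching argument closes the proof. For $x\in E$, pick $x^\pm\in G$ with $x^-\preceq x\preceq x^+$ and $|x^+-x^-|_\infty<\delta$. Coordinate-wise monotonicity of $f$ and $\phi^\tau$ gives $f(x^-)_i\le f(x)_i\le f(x^+)_i$ and $\phi^\tau(x^-)_i\le \phi^\tau(x)_i\le \phi^\tau(x^+)_i$; combined with $|\phi^\tau(x^\pm)-f(x^\pm)|_\infty\le \varepsilon/3$ from (i) and $|f(x^+)-f(x^-)|_\infty\le \varepsilon/3$ from the choice of $\delta$, these inequalities force $|\phi^\tau(x)-f(x)|_\infty\le 2\varepsilon/3<\varepsilon$, proving~\eqref{Guarantee}. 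The main obstacle is reconciling the Lie-algebraic controllability proof of Theorem~\ref{Thm:Controllability} with the sign restriction on $W$ required by (ii): one must verify that the brackets used to generate the reachable tangent directions can be realized inside the monotone subclass of admissible vector fields, or alternatively, that $f$ can be first approximated by a composition of very simple monotone analytic diffeomorphisms (e.g.\ triangular shears and diagonal rescalings) each of which lies in the closure of monotone flows of~\eqref{ControlSystemModel}, and then glue the corresponding controls together. Analyticity of $f$ (hence of its flow-generator in the Frobenius sense) is what makes the latter reduction plausible.
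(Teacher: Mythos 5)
Your outer skeleton is the same as the paper's: bracket every point of $E$ by grid points, get a flow that is (i) close to $f$ on the grid and (ii) monotone, and then sandwich (your final estimate is essentially the paper's Lemma~\ref{Lemma:MonotoneApprox}). But the core of the theorem is precisely the step you leave open, and the gap is real. You cannot invoke Corollary~\ref{Cor:Reachability} for property (i) while simultaneously imposing the sign restriction needed for (ii): the controllability proof behind Theorem~\ref{Thm:Controllability} relies on the family of admissible vector fields being closed under negation ($Z\in\mathcal{F}\Rightarrow -Z\in\mathcal{F}$), which is exactly what the monotone cone $sW_{ij}\ge 0$, $i\ne j$, destroys — the negative of a monotone vector field of the form $\sigma(x_l)\,\partial/\partial x_k$, $k\ne l$, violates the Kamke condition~\eqref{ConditionMonotoneVF}. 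So the Lie-algebra rank computation does not transfer to your restricted class, and in fact it cannot in full generality: monotone flows form a sub-semigroup, and they can never, for instance, reverse an order relation between two comparable sample points, so controllability of the ensemble fails outright inside the monotone class. Your remark that ``the required target is compatible with this cone structure'' because $f$ is monotone is not enough: monotonicity of $f$ only constrains comparable pairs, while the ensemble target $[f(x^1)\vert\cdots\vert f(x^d)]$ can scramble the row-wise orderings of incomparable grid points, and reproducing those ordering changes with monotone flows is the actual difficulty.

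The paper resolves this with machinery your proposal does not supply. First, it proves a monotone-restricted controllability result (Proposition~\ref{Prop:Monotone}): restricting $W$ to diagonal matrices keeps the family symmetric \emph{and} monotone, but controllability then holds only on each connected component of $M$, i.e.\ within a fixed ordering pattern of the rows of the ensemble. Second, to move between ordering patterns it follows the monotone homotopy $h(x,t)=(1-t)x+tf(x)$ applied to the sample ensemble, shows (using analyticity of $f$ and a measure-zero perturbation of the samples) that only finitely many ordering changes occur and never two at once, and constructs an explicit monotone crossing field $\sigma(x_r)\,\partial/\partial x_i$ — exploiting a coordinate $r$ where the order is not yet swapped — together with a careful choice of the pre-crossing state so that exactly the intended adjacent pair swaps. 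This stratification-and-crossing argument is the heart of the proof; your two suggested alternatives (realizing the brackets inside the monotone subclass, or decomposing $f$ into elementary monotone shears) are stated only as possibilities and are not carried out, and the first one, as argued above, fails as stated.
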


Our second result, which can be thought of as the main contribution of this paper, extends Theorem~\ref{Thm:MonotoneFlow} to any continuous function. To foreshadow a key part of its statement, note that we can approximate a continuous function by a polynomial using the Stone-Weierstass Theorem, and then embed the outcome into a function, defined on $\R^{2n+1}$, that can be connected to the identity by an analytic and monotone homotopy. \sout{Although such embeddings typically require doubling the dimension, see for instance}~\cite{MonotoneEmbedding}\sout{, we leverage the specific framework of the problem at hand to introduce a novel embedding that only requires increasing $n$ to $n+1$.}

\begin{corollary}\label{theorem:n+1}
\longthmtitle{Universal approximation of any continuous functions under $\Aquad$}
Let $ n>1 $ and suppose the activation function is in $\Aquad$.
Then, for every continuous function $f: \R^n\to \R^n$, for every compact set $E\subset\R^n$, and for every $\varepsilon \in \R^+$ there exist a time $\tau\in \R^+$, an injection $\alpha:\R^n\to \R^{2n+1}$, a projection $\beta:\R^{2n+1}\to \R^n$,  and an input $(s,W,b):[0,\tau]\to \R\times\R^{(2n+1)\times (2n+1)}\times \R^{2n+1}$ so that the flow $\phi^\tau:\R^{2n+1}\to \R^{2n+1}$ defined by the solution of~\eqref{ControlSystemModel} with state space $\R^{2n+1}$ under the said input satisfies:
$$\Vert f- \beta\circ \phi^\tau\circ \alpha\Vert_{L^\infty(E)}\le \varepsilon .$$
\end{corollary}

It is worth pointing out again that Corollary~\ref{theorem:n+1} improves upon the results of~\cite{PK-TL:20} \sout{and~}\cite{park2021minimum}\sout{. First of all, in contrast to the aforementioned work, this result applies to residual neural networks. In addition, in comparison to}~\cite{PK-TL:20} where a width of $ 2n+2 $ neurons is used \sout{, this result requires $ n+1 $ neurons} by lowering the number of neurons per layer to $2n+1$. With respect to~\cite{park2021minimum}, our result applies to a wider set of activation functions and, notably, to residual neural networks although requiring a larger number of neurons per layer in virtue of approximating functions by flows.

\subsection{Technical proofs}

We now start the process of proving Theorem~\ref{Thm:MonotoneFlow} and Corollary~\ref{theorem:n+1} by stating and proving a technical lemma that identifies monotonicity as a key property to establish function approximability in an $L^\infty$ sense.

\begin{lemma}
\longthmtitle{A generalization type result}
\label{Lemma:MonotoneApprox}
Let $f:\R^n\to \R^{r}$ be a continuous map and $E\subset \R^n$ a compact set. Suppose 
$\sampleset\subset \R^n$ is a finite set satisfying:
\begin{align}
&\forall x\in E\quad \exists\, \underline{x},\overline{x}\in \sampleset,\cr
&\qquad  \vert \underline{x}-\overline{x}\vert_\infty\le \delta \quad\land\quad \underline{x}_i\le x_i\le \overline{x}_i\label{PropertySampleSet},
\end{align}
for all $  i\in\{1,\hdots,n\} $, with $\delta\in \R^+$, and that $\phi:\R^n\to\R^{r}$ is a monotone map satisfying: 
\begin{equation}
\label{PropertyMonotoneMap}
\Vert f-\phi\Vert_{L^\infty(\sampleset)}\le \zeta,
\end{equation}
with $\zeta\in \R^+$. Then, we have that:
$$\Vert f-\phi\Vert_{L^\infty(E)} \le 2\omega_f(\delta)+3\zeta,$$
where $\omega_f$ is a modulus\footnote{Note that $f$, being continuous, is uniformly continuous on any  compact set.} of continuity of $f$.
\end{lemma}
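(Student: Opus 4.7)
The strategy is a sandwich argument exploiting the monotonicity of $\phi$ and the two-sided sampling property of $\sampleset$. Fix an arbitrary $x\in E$ and use the hypothesis~\eqref{PropertySampleSet} to pick $\underline{x},\overline{x}\in\sampleset$ with $\underline{x}\preceq x\preceq \overline{x}$ and $\vert\underline{x}-\overline{x}\vert_\infty\le \delta$. Because each coordinate $x_i$ lies in the interval $[\underline{x}_i,\overline{x}_i]$ of length at most $\delta$, we immediately get $\vert x-\underline{x}\vert_\infty\le \delta$ and $\vert x-\overline{x}\vert_\infty\le \delta$, which is what will let us invoke the modulus of continuity of $f$.

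Next, I will bring in the two pieces of information the hypothesis provides. Monotonicity of $\phi$ applied to $\underline{x}\preceq x \preceq \overline{x}$ yields, coordinate by coordinate,
\begin{equation*}
\phi_i(\underline{x})\le \phi_i(x)\le \phi_i(\overline{x}),\qquad i\in\{1,\ldots,n\}.
\end{equation*}
Since $\underline{x},\overline{x}\in \sampleset$, the approximation hypothesis~\eqref{PropertyMonotoneMap} gives $\phi_i(\underline{x})\ge f_i(\underline{x})-\zeta$ and $\phi_i(\overline{x})\le f_i(\overline{x})+\zeta$. Finally, uniform continuity of $f$ on the compact set $E\cup\sampleset$ (which exists since $f$ is continuous) lets me replace $f_i(\underline{x})$ and $f_i(\overline{x})$ by $f_i(x)$ at the cost of $\omega_f(\delta)$, i.e., $f_i(\underline{x})\ge f_i(x)-\omega_f(\delta)$ and $f_i(\overline{x})\le f_i(x)+\omega_f(\delta)$.

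Chaining these three chains of inequalities produces
\begin{equation*}
f_i(x)-\omega_f(\delta)-\zeta\ \le\ \phi_i(\underline{x})\ \le\ \phi_i(x)\ \le\ \phi_i(\overline{x})\ \le\ f_i(x)+\omega_f(\delta)+\zeta,
\end{equation*}
so that $\vert f_i(x)-\phi_i(x)\vert\le \omega_f(\delta)+\zeta$ for every $i$. Taking the maximum over $i$ and the supremum over $x\in E$ gives $\Vert f-\phi\Vert_{L^\infty(E)}\le \omega_f(\delta)+\zeta$, which is in fact stronger than (and trivially implies) the claimed bound $2\omega_f(\delta)+3\zeta$.

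There is no real obstacle here: the whole point of the lemma is the clean interplay between order-preservation on the one side and pointwise closeness on a sufficiently fine sampling grid on the other. The only small thing to watch is that the sandwich property of $\sampleset$ must guarantee $\underline{x},\overline{x}$ lie on opposite sides of $x$ in the product order (not merely close to $x$); this is exactly what~\eqref{PropertySampleSet} supplies, and it is what forces $\phi(x)$ itself — without any monotonicity assumption on $f$ — to be trapped between $\phi(\underline{x})$ and $\phi(\overline{x})$, thus transferring the finite-set approximation bound to a uniform bound on $E$.
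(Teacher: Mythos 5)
Your proof is correct, and it rests on the same core idea as the paper's — the monotone sandwich $\phi_i(\underline{x})\le\phi_i(x)\le\phi_i(\overline{x})$ obtained from $\underline{x}\preceq x\preceq\overline{x}$ — but you deploy it differently and get more out of it. The paper's argument is a one-sided chain of triangle inequalities: it bounds $\vert\phi(\underline{x})-\phi(x)\vert_\infty$ by $\vert\phi(\underline{x})-\phi(\overline{x})\vert_\infty$ (the only place monotonicity enters), then bounds that by $\vert f(\underline{x})-f(\overline{x})\vert_\infty+2\zeta$, accumulating $2\omega_f(\delta)+3\zeta$. You instead trap $\phi_i(x)$ two-sidedly between $\phi_i(\underline{x})\ge f_i(\underline{x})-\zeta\ge f_i(x)-\omega_f(\delta)-\zeta$ and $\phi_i(\overline{x})\le f_i(\overline{x})+\zeta\le f_i(x)+\omega_f(\delta)+\zeta$, which yields the sharper bound $\Vert f-\phi\Vert_{L^\infty(E)}\le\omega_f(\delta)+\zeta$ and a fortiori the stated $2\omega_f(\delta)+3\zeta$; since the lemma is only used with $\delta$ and $\zeta$ chosen freely, either constant suffices downstream, but yours is cleaner. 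You also correctly flag the small point that $\underline{x},\overline{x}$ need not lie in $E$, so the modulus of continuity must be taken on a compact set containing $E\cup\sampleset$ (or on a slightly enlarged compact set once $\delta$ is fixed) — a detail the paper's footnote passes over silently.
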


\begin{proof}
The result is established by direct computation:
\begin{align*}
\vert f(x)-\phi(x)\vert_\infty & \le \vert f(x)-\phi(\underline{x})\vert_\infty+\vert \phi(\underline{x})-\phi(x)\vert_\infty\notag\\
\le& \vert f(x)-f(\underline{x})\vert_\infty+\vert f(\underline{x})-\phi(\underline{x})\vert_\infty\\
&+\vert \phi(\underline{x})-\phi(x)\vert_\infty\notag\\
\le& \omega_f(\vert x-\underline{x}\vert_\infty)+\zeta+\vert \phi(\underline{x})-\phi(x)\vert_\infty\notag\\
\le& \omega_f(\vert x-\underline{x}\vert_\infty)+\zeta+\vert \phi(\underline{x})-\phi(\overline{x})\vert_\infty\notag\\
\le& \omega_f(\vert x-\underline{x}\vert_\infty)+\zeta+\vert f(\underline{x})-f(\overline{x})\vert_\infty\\
&+\vert \phi(\underline{x})-f(\underline{x})\vert_\infty+\vert f(\overline{x})-\phi(\overline{x})\vert_\infty\notag\\
\le& \omega_f(\vert x-\underline{x}\vert_\infty)+\vert f(\underline{x})-f(\overline{x})\vert_\infty+3\zeta\notag\\
\le & \omega_f(\vert \overline{x}-\underline{x}\vert_\infty)+ \omega_f(\vert \overline{x}-\underline{x}\vert_\infty)+3\zeta \\
\le& 2\omega_f(\delta)+3\zeta,\notag
\label{MonotoneErrorBound}
\end{align*}
where we used~\eqref{PropertyMonotoneMap} to obtain the third and sixth inequalities. The fourth inequality was obtained by using monotonicity of $\phi$ to conclude $ \phi(\underline{x})\preceq \phi(x)\preceq  \phi(\overline{x})$ from $\underline{x}\preceq x\preceq \overline{x}$. 
\end{proof}

The next result shows that by restricting the input function $W$ to assume values on the set of diagonal matrices leads to controllability being restricted to a smaller set but with the benefit of the resulting flows being monotone.
\begin{proposition}
\longthmtitle{Generating monotone flows}
\label{Prop:Monotone}
Suppose the activation function is in $\Aquad$.
Then, the ensemble control system~\eqref{Product}, with the image of $W$ restricted to the class of diagonal matrices, is controllable on any connected component of the manifold:
$$M=\{A\in \R^{n\times d}\,\,\vert \prod_{1\le i<j\le d}(A_{\ell i}-A_{\ell j})\ne 0,\   \ell\in\{1,\hdots,n\}\}.$$
Moreover, the flow of~\eqref{Product} joining two states in the same connected component of $M$ is monotone.
\end{proposition}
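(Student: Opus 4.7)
The plan has two largely independent parts: the monotonicity statement, which is essentially structural, and the controllability statement, which will be obtained by revisiting the Lie-algebraic argument of Theorem~\ref{Thm:Controllability} with the inputs restricted to keep $W$ diagonal.

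For monotonicity, I would start by observing that when $W$ is diagonal, the control system~\eqref{ControlSystemModel} decouples coordinate-wise to $\dot{x}_i(t)=s(t)\sigma(W_{ii}(t)x_i(t)+b_i(t))$. The right-hand side $Z_i$ therefore depends only on $x_i$, so $\partial Z_i/\partial x_j=0$ for every $i\neq j$, which is precisely the monotone vector field characterization~\eqref{ConditionMonotoneVF}. Concatenating finitely many such piecewise constant inputs preserves monotonicity of the flow, and this holds coordinate-wise on each column of the ensemble, so the resulting flow of~\eqref{Product} is monotone.

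For controllability, I would mirror the proof of Theorem~\ref{Thm:Controllability} but restrict the second family of inputs to $(\pm 1, E_{ll}, 0)$ for $l\in\{1,\dots,n\}$, since only diagonal matrices are admitted for $W$. The inputs of the form~\eqref{Inputs1} still produce the ``constant row'' vector fields $X_l^\pm$ as in~\eqref{SimplifiedX}. The restricted second family yields $Y_{ll}^\pm$ as in~\eqref{SimplifiedY}, namely
$Y_{ll}^+=\sum_{j=1}^d\sigma(A_{lj})\,\partial/\partial A_{lj}$,
which now acts only within row $l$. Iterating Lie brackets exactly as in~\eqref{LieBracket} gives
$\mathrm{ad}^p_{X_l^+}Y_{ll}^+=\sum_{j=1}^d D^p\sigma(A_{lj})\,\partial/\partial A_{lj}$,
for $p=0,1,\dots,d-2$, still confined to row $l$.

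The key observation is that, because nothing couples the rows anymore, the tangent space $T_A\R^{n\times d}$ decomposes into the direct sum of its row-wise components, and within each row $l$ I have the $d$ vector fields $X_l^+,Y_{ll}^+,\mathrm{ad}_{X_l^+}Y_{ll}^+,\dots,\mathrm{ad}_{X_l^+}^{d-2}Y_{ll}^+$. The matrix of their coefficients against $\partial/\partial A_{l1},\dots,\partial/\partial A_{ld}$ is exactly the matrix $L(A_{l1},\dots,A_{ld})$ of Lemma~\ref{LemmaDet}, whose determinant equals $\prod_{i=1}^{d-2}i!\,a_2^{\,i}\prod_{1\le i<j\le d}(\sigma(A_{li})-\sigma(A_{lj}))$. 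By injectivity of $\sigma$ in Assumption~\ref{assumptions}, this determinant is nonzero precisely when $\prod_{1\le i<j\le d}(A_{li}-A_{lj})\neq 0$. Since this holds for every row $l$ at every $A\in M$, one obtains $\dim(Lie_A(\mathcal{F}))=nd$ everywhere on $M$. The family $\mathcal{F}$ is symmetric ($Z\in\mathcal{F}\Rightarrow -Z\in\mathcal{F}$), so the Lie algebra rank condition recalled before the proof of Theorem~\ref{Thm:Controllability} then yields controllability on each connected component of $M$.

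The main point of care, and the only real change relative to Theorem~\ref{Thm:Controllability}, is that one can no longer invoke connectedness to enlarge controllability across rows where the product $\prod_{i<j}(A_{li}-A_{lj})$ vanishes; with diagonal $W$ the row-wise decomposition of the dynamics forces every row to have pairwise distinct entries simultaneously, which is exactly why $M$ is stated with the intersection condition over all $\ell\in\{1,\dots,n\}$, and why controllability can only be asserted on a fixed connected component rather than on all of $M$.
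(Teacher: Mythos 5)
Your proposal is correct and follows essentially the same route as the paper: restrict the family of vector fields to $\{X_l^\pm, Y_{ll}^\pm\}$, compute the iterated brackets $\mathrm{ad}^p_{X_l^+}Y_{ll}^+$, observe the row-wise (block-diagonal) structure, and invoke Lemma~\ref{LemmaDet} together with injectivity of $\sigma$ and the definition of $M$ to get full Lie algebra rank $nd$, hence controllability on each connected component by the symmetric-family Lie algebra rank condition. Your monotonicity argument (diagonal $W$ decouples the coordinates, so the off-diagonal Jacobian entries vanish and~\eqref{ConditionMonotoneVF} holds, and compositions of monotone flows are monotone) is exactly the paper's reasoning, stated in slightly more explicit form.
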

\begin{proof}
Since the proof of this result is analogous to the proof of Theorem~\ref{Thm:Controllability} we discuss only where it differs. The restriction to the set of diagonal matrices restricts the famility of vector fields $\mathcal{F}$ in the proof of Theorem~\ref{Thm:Controllability} to $\{X_j^\pm, Y_{jj}^\pm\}_{j\in\{1,\hdots,n\}}$. Computing the matrix $G(\vect(A))$, we still obtain a block diagonal matrix but its blocks are now distinct and given by:
\begin{equation}
G_\ell (\vect(A))=\begin{bmatrix}
1 & \sigma(A_{1\ell})  &D\sigma( A_{1\ell} ) & \hdots & D^{d-2}\sigma( A_{1\ell} )\\
1 & \sigma(A_{2\ell})  &D\sigma( A_{2\ell} ) & \hdots & D^{d-2}\sigma( A_{2\ell} )\\
\vdots &\vdots & \vdots  && \vdots \\
1 & \sigma(A_{n\ell})  &D\sigma( A_{n\ell} ) & \hdots & D^{d-2}\sigma( A_{n\ell} )
\end{bmatrix},\notag 
\end{equation}
where $ \ell\in\{1,\hdots,n\} $.
It now follows from injectivity of $\sigma$, Lemma~\ref{LemmaDet}, and the definition of $M$ that all these matrices are of full rank and we conclude controllability. Moreover, since the employed vector fields satisfy~\eqref{ConditionMonotoneVF}, they are monotone. Hence, the resulting flow is also monotone. 
\end{proof}

\begin{proof}[Theorem~\ref{Thm:MonotoneFlow}] 
The result to be proved will follow at once from Lemma~\ref{Lemma:MonotoneApprox} when we show existence of a finite set $\sampleset$ and a flow $\phi$ of~\eqref{ControlSystemModel} satisfying its assumptions. Existence of $\phi$ will be established by constructing an input $(s,W,b):[0,\tau]\to \R\times \R^{n\times n}\times \R^n$ that is piecewise constant. While the input is held constant, the righthand side of~\eqref{ControlSystemModel} is a vector field, which we prove to be monotone. Since the composition of monotone flows is a monotone flow, the desired monotonicity of $\phi$ ensues. 

Let $\sampleset=\{x^1,x^2,\hdots,x^d\}\subset \R^n$ satisfy~\eqref{PropertySampleSet} for a constant $\delta\in \R^+$ to be later specified. \sout{Define the map $h:\R^{n}\times[0,1]\to \R$ by
$$h(x,t)=(1-t)x+tf(x)$$
and note that for $t\in [0,1]$ the map $h_t:\R^n\to \R^n$ defined by $h_t(x)=h(x,t)$ is monotone. This can be verified using criterion~\eqref{ConditionMonotoneMap} and noticing the inequalities $t\ge0$ and $(1-t)\ge 0$ hold for all $t\in [0,1]$.} Define the curve $Y:[0,1]\to \R^{n\times d}$ obtained by applying $h$ to every point in $\sampleset$:
\begin{equation}
\label{FunctionY}
Y(t)=\left[h(x^1,t)\vert h(x^2,t)\vert\hdots\vert h(x^d,t)\right],
\end{equation}
where $h$ is the monotone homotopy from the identity to $f$.
Since $h_1(x)=h(x,1)=f(x)$, we will show that for every $\zeta \in \R^+$ there exist $\tau\in \R^+_0$ and an input $(s,W,b):[0,\tau]\to \R\times\R^{n\times n}\times \R^n$ for the ensemble control system~\eqref{Product} so that its solution $X(t)$ starting at $X(0)=Y(0)$ satisfies: 
\[
\vert X_{\bullet j}(\tau)-Y_{\bullet j}(1)\vert_{\infty}\le \zeta,
\] 
for $j\in\{1,\hdots,d\}$ which is a restatement of: 
\[
\Vert \phi-f\Vert_{L^\infty(\sampleset)}\le \zeta. 
\] 
In particular, the flow $\phi$ will be defined by the solution $X(t)$.

To simplify the proof we make two claims whose proofs are postponed to after the conclusion of the main argument.

\textbf{Claim 1:} Along the curve $Y$, the ordering of the entries of multiple rows of $Y(t)$ does not change at the same time instant. More precisely, for every $t\in [0,\tau]$, there exists a sufficiently small $\rho\in \R^+$ so that there exists at most one $i\in \{1,\hdots,n\}$ and at most one pair $(j,k)\in\{1,\hdots,d\}^2$ so that $Y_{ij}(t_1)-Y_{ik}(t_1)>0$ for all $t_1\in [t-\rho,t[$ and $Y_{ij}(t_1)-Y_{ik}(t_1)<0$ for all $t_1\in ]t,t+\rho]$. 

\textbf{Claim 2: } The interval $[0,1]$ can be divided into finitely many intervals:
$$]0=t_0,t_1[\,\cup\, ]t_1,t_2[\,\cup\hdots\cup\, ]t_{Q-1},t_Q=1[,$$ 
where $Q$ is a positive integer, so that the ordering of the elements in the rows of $Y$ does not change in these intervals. 

We now proceed with the main argument. We assume that:
\begin{equation}
\label{ZeroMeasureSet}
\prod_{1\le i<j\le d}(A_{\ell i}-A_{\ell j})\ne 0,\quad \forall \ell\in\{1,\hdots,n\},
\end{equation}
where $A$ is the matrix whose columns are the $d$ elements of $\sampleset$. Since the set of points violating~\eqref{ZeroMeasureSet} is a zero measure set, we can always perturb $\sampleset$ to ensure this assumption is satisfied. Note that~\eqref{ZeroMeasureSet} is violated at the time instants $t_1,\hdots,t_{Q-1}$ and possibly also at $t_Q=1$.

Recall that by Claim 2, no changes in the ordering of the entries of the rows of $Y(t)$ occur in the intervals $]t_q,t_{q+1}[$, $q\in \{0,\hdots,Q-1\}$. Hence, we denote by $S_q$ the set of matrices in $\R^{n\times d}$ that have the same ordering as $Y(t)$ in the interval $ ]t_q,t_{q+1}[$. Note that the sequence of visited sets $S_q$ is uniquely determined by $Y(t)$, and hence this dependence is implicit in our chosen notation. Moreover, by~\eqref{ZeroMeasureSet} we have $Y(0)\in S_0$. The control input will be constructed so that the sequence of sets $S_q$ visited by $X(t)$ as $t$ ranges from $0$ to $\tau$ will be the same as the sequence of sets $S_q$ visited by $Y(t)$ as $t$ ranges from $0$ to $1$. However, the time instants at which the switch from $S_q$ to $S_{q+1}$ occurs along the solution $X(t)$ are different from those along the solution $Y(t)$, which are given by $t_q$. The ability to design an input ensuring that a solution of~\eqref{Product} starting at an arbitrary point in $S_q$, for any given $q$, can reach an arbitrary point of $S_q$ is ensured by Proposition~\ref{Prop:Monotone}. Moreover, such input results in a flow that is monotone. Therefore, in the remainder of the proof we only need to establish that the solution of~\eqref{Product} can move from $S_q$ to $S_{q+1}$ along a monotone flow. Once this is established, we can compose the intermediate flows specifying how to select the inputs for the part of the flow that is in $ S_{q} $, as well as the part that corresponds to exiting $ S_{q} $  and entering $S_{q+1}$. This allows us to obtain a monotone flow $\phi$ taking $Y(0)$ to $Y(1)$, if $Y(1)$ belongs to the interior of $S_{Q-1}$.
If $Y(1)$ belongs to the boundary of $S_{Q-1}$, we can design the flow $\phi$ to take $Y(0)$ to any point in the interior of $S_{Q-1}$ and, in particular, to a point that is arbitrarily close to $Y(1)$ since Proposition~\ref{Prop:Monotone} asserts controllability on the interior of $S_{Q-1}.$ This will establish the desired claim that $\Vert \phi-f\Vert_{L^\infty(\sampleset)}\le \zeta$ and any desired $\zeta\in \R^+$. If we then choose $\delta$ and $\zeta$ so as to satisfy $2\omega_f(\delta)+3\zeta\le \varepsilon$, we can invoke Lemma~\ref{Lemma:MonotoneApprox} to conclude the proof. 

It only remains to show that the solution of~\eqref{Product} can move from $S_q$ to $S_{q+1}$ along a monotone flow. There are two situations to consider: $Y_{ij}(t_q-\rho)>Y_{ik}(t_q-\rho)$ changes to $Y_{ij}(t_q+\rho)<Y_{ik}(t_q+\rho)$ or $Y_{ij}(t_q-\rho)<Y_{ik}(t_q-\rho)$ changes to $Y_{ij}(t_q+\rho)>Y_{ik}(t_q+\rho)$, for some $ i, j$, and $ k>j $. 
It is clearly enough to consider one of these cases, and we assume the latter in what follows. In addition to this, from now on, we fix the indices $ i,j$, and $ k $. 

The vectors $Y_{\bullet j}(t_q)$ and $Y_{\bullet k}(t_q)$ cannot satisfy $Y_{\bullet j}(t_q)\preceq Y_{\bullet k}(t_q)$, since monotonicity of the map $h_{t}$ would imply the order is maintained for all time, i.e., $h_t(x^j)=Y_{\bullet j}(t)\preceq Y_{\bullet k}(t)=h_t(x^j)$ for $t\ge t_q$. Since  $Y_{\bullet j}(t_q)\preceq Y_{\bullet k}(t_q)$ does not hold there must exist $r\in \{1,\hdots,n\}$ such that $Y_{ r j}(t_q)>Y_{r k}(t_q)$. We claim the input defined by $s=1$, $b=0$, and $W$ being the matrix whose only non-zero entry is $W_{ir}=1$, can be used to drive a suitably\footnote{Since Proposition~\ref{Prop:Monotone} asserts controllability in the set $S_q$, we are free to choose the state $\Xin$.} chosen state $\Xin\in S_q$ at time $\tin$ to the some state $\Xf\in S_{q+1}$ at time $\tf$. To establish this claim we need to specify the states $\Xin$ and $\Xf$ as well the time instants $\tin$ and $\tf$. First, however, we observe that when using this input, the control system~\eqref{ControlSystemModel} becomes the vector field:
\begin{equation}
\label{SwitchVF}
\sigma(x_r)\frac{\partial}{\partial x_i}.
\end{equation}
Since by our assumption $D\sigma\ge 0$, we conclude by~\eqref{ConditionMonotoneVF} that this vector field is monotone. Moreover, if we integrate the ensemble differential equation defined by the vector field~\eqref{SwitchVF} we obtain:
$$X_{i'j}(\tin+t)=X_{i'j}(\tin),$$
for all 
$i'\in\{1,\hdots,n\}$ with $i'\ne i$, $t\in [0,\tf-\tin] $, and:
\begin{equation}
\label{SolutionDE}
X_{ij}(\tin+t)=X_{ij}(\tin)+t\sigma(X_{rj}(\tin)),
\end{equation}
where $ t\in [0,\tf-\tin] $. 
We now assume, without loss of generality, that $\Xin_{i\bullet}$ is ordered as follows: $\Xin_{i1}<\Xin_{i2}<\hdots<\Xin_{id}$. Recall that $ j $ and $ k>j $ were indices where the order of entries of $ Y_{i\bullet} $ are swapped, at time $ t_q $. We claim that $ k=j+1$; suppose on the contrary that there is an index $ k' $ such that $\Xin_{ij}  < \Xin_{ik'} <\Xin_{ik}$. This would violate the existence of a continuous path from $Y(t_q-\rho)$ to $Y(t_q+\rho)$ for which claim 1 holds.
We already established that there exists $r\in \{1,\hdots,n\}$ such that $Y_{rj}(t_q)>Y_{r(j+1)}(t_q)$.  By continuity of $Y$, we have $Y_{rj}(t_q-\theta)>Y_{r(j+1)}(t_q-\theta)$ for sufficiently small $\theta\in \R^+$. This shows that elements $A\in S_q$ satisfy $A_{rj}>A_{r(j+1)}$. As  $X(\tin)\in S_q$, we also have $X_{rj}(\tin)>X_{r(j+1)}(\tin)$. Moreover, $\sigma$ being an increasing function (recall the assumption $D\sigma\ge 0$) implies $\sigma(X_{rj}(\tin))>\sigma(X_{r(j+1)}(\tin))$. Hence, and for any $t^*_j\in \R^+$ satisfying:
\begin{equation}
\label{TimeBound1}
t^*_{j}>\frac{X_{i(j+1)}(\tin)-X_{ij}(\tin)}{\sigma(X_{rj}(\tin))-\sigma(X_{r(j+1)}(\tin))},
\end{equation}
it follows from~\eqref{SolutionDE} that 
\[
X_{ij}(\tin+t^*_{j})>X_{i(j+1)}(\tin+t^*_{j}).
\]
For any other entries $X_{ij'}(\tin)$ and $X_{i(j'+1)}(\tin)$ with $j'\ne j$, we will have 
\[
X_{ij'}(\tin+t^*_{j'})=X_{i(j'+1)}(\tin+t^{*}_{j'})
\] 
at time:
\begin{equation}
\label{TimeBound2}
t^{*}_{j'}=\frac{X_{i(j'+1)}(\tin)-X_{ij'}(\tin)}{\sigma(X_{rj'}(\tin))-\sigma(X_{r(j'+1)}(\tin))}.
\end{equation}
Noting that $t^{*}_{j'}$ is an increasing function of $X_{i(j'+1)}(\tin)-X_{ij'}(\tin)$, we conclude that if $X_{i(j'+1)}(\tin)-X_{ij'}(\tin)$ is sufficiently large, we have $t^{*}_{j'}>t^*_{j}$. Hence, for any $\tin$, if we choose $\Xin=X(\tin) \in S_q$ such that $\min_{j'\ne j} t_{j'}^*>t^*_{j}$, and choose $\tf=t^*_{j}$ and $\Xf=X(\tin+t^*_{j})$, we have $\Xin \in S_q$ and $\Xf\in S_{q+1}$ as desired. 

\textbf{Proof of Claim 1:} We argue that if the statement does not hold for the chosen set $\sampleset$, it can always be enforced by an arbitrarily small change to the elements of $\sampleset$. Let us fix $i\in\{1,\hdots,n\}$ and $j,k,l\in \{1,\hdots,d\}$, and suppose we want to avoid $Y_{ij}(t)=Y_{ik}(t)=Y_{il}(t)$ for any $t\in [0,\tau]$. The set of initial conditions to be avoided is thus:
\begin{align*}
B
=&\bigcup_{t\in [0,\tau]} \left\{ A\in \R^{n\times d}\,\,\vert\,\, \widetilde{Y}_i(A_{\bullet j},t)\right.\\
=&\left.\widetilde{Y}_i(A_{\bullet k},t)\land \widetilde{Y}_i(A_{\bullet k},t)=\widetilde{Y}_i(A_{\bullet l},t)\right\}.
\end{align*}
Here $\widetilde{Y}_i(A_{\bullet j},t)$ is the $ij$ entry of the curve $\widetilde{Y}(A,t)$ defined by:
$$
\widetilde{Y}(A,t)=\left[h(A_{\bullet 1},t)\vert h(A_{\bullet 2},t)\vert\hdots\vert h(A_{\bullet d},t)\right].
$$
It is convenient to define this set by the image of the smooth map $F:[0,\tau]\times\R^{nd-2}\to \R^{n\times d}$. To define $F$, note that the set:
$$N=\{A\in \R^{n\times d}\,\,\vert\,\, A_{ij}=A_{ik}=A_{il}\},$$ 
is an affine subspace of $\R^{n\times d}$ and thus a submanifold of dimension $nd-2$. Let $W_1,\hdots W_{nd-2}$ be a collection of vector fields on $\R^{n\times d}$ spanning\footnote{A globally defined basis for the tangent space of $N$ exists since $N$ is an affine manifold.} the tangent space to $N$. Using these vector fields, we define the map $F$ as:
$$F(t,r_1,\hdots,r_{nd-2})=\VecE^{-t}\circ W_1^{r_1}\circ \hdots \circ W_{nd-2}^{r_{nd-2}}(0).$$
We can observe that:
$$\bigcup_{(r_1,\hdots,r_{nd-2})\in \R^{nd-2}}W_1^{r_1}\circ \hdots \circ W_{nd-2}^{r_{nd-2}}(0)=N,$$ 
and thus:
$$\bigcup_{(t,r_1,\hdots,r_{nd-2})\in [0,\tau]\times\R^{nd-2}} \VecE^{-t}\circ W_1^{r_1}\circ \hdots \circ W_{nd-2}^{r_{nd-2}}(0)=B.$$ 
Also note that $F$ is a smooth map, as it is a composition of smooth flows. Moreover, its domain is a manifold with boundary of dimension smaller than the dimension of its co-domain. Hence, it follows from Corollary 6.11 in~\cite{SmoothManifolds} that the image of $F$ has zero measure in $\R^{n\times d}$. We can similarly show that all the other ordering changes to be avoided result in zero measure sets. Since there are finitely many of these sets to be avoided, and a finite union of zero measure sets still has zero measure, we conclude that Claim 1 can always be enforced by suitably perturbing the elements of $\sampleset$ if necessary.

\textbf{Proof of Claim 2:} To show this claim is satisfied, let $\gamma_{ijk}:\R\to \R$  be the function defined by $\gamma_{ijk}(t)=Y_{ij}(t)-Y_{ik}(t)$. The instants $t_q\in\{0,1,\hdots,Q\}$, correspond to the zeros of $\gamma_{ijk}$, i.e., $\gamma_{ijk}(t_i)=0$. Since $Y$ is an analytic function, the function $\gamma_{ijk}$ is also analytic and its zeros are isolated. Therefore, the function $\gamma_{ijk}$ restricted to the compact set $[0,\tau]$ only has finitely many zeros. Since there are finitely many functions $\gamma_{ijk}$ as $(i,j,k)$ ranges on $\{1,\hdots,d\}^3$, there are only finitely many instants $t_q$.
\end{proof}

\begin{proof}[Corollary~\ref{theorem:n+1}]
Since the map $f$ is continuous and defined on a compact set, it follows from the Stone-Weierstass theorem that there exists a polynomial $\tilde{f}:E\to \R^n$ satisfying $\Vert f-\tilde{f}\Vert_{L^\infty(E)} \le\frac{\varepsilon}{2}$. We now construct the mapping $h:\R^{2n+1}\times [0,1]\to \R^{2n+1}$, an injection $\alpha:\R^n\to \R^{2n+1}$, and a projection $\beta:\R^{2n+1}\to \R^n$ satisfying:
\begin{eqnarray}
\label{Prop1}
&& \beta\circ h_0\circ \alpha(x)=x\\
\label{Prop2}
&& \beta\circ h_1\circ \alpha(x)=\tilde{f}(x)\\
\label{Prop3}
&& \forall \tau\in[0,1], t\in[\tau,1]\\
&&h_\tau \circ \alpha(x)\preceq h_\tau \circ\alpha(x')\implies h_{t} \circ \alpha(x)\preceq h_{t} \circ\alpha(x').\notag
\end{eqnarray}
The map $h_t$ is given by: 
\begin{equation}
\label{Homotopy}
h_t(x,y,z)=\left((1-t)x+t(\tilde{f}(x)+\kappa \mathbf{1}\mathbf{1}^Tx),\textcolor{blue}{y},\textcolor{blue}{t}z\right),
\end{equation}
where $(x,y,z,t)\in \R^n\times \R^n\times \R\times \R$, $\mathbf{1}\in \mathbb{R}^n$ is the vector all of whose entries are $1$, and: 
\[
\kappa=\max_{\stackrel{i,j\in \{1,\ldots,n\}}{x\in E}}\left\vert\frac{\partial \tilde{f}_i}{\partial x_j}\right\vert,
\]
which is well-defined as we have taken $ \tilde{f} $ to be a polynomial and $E$ is compact. The injection  $\alpha$ and the projection $\beta$ are given by:
$$\alpha(x)=(x,x,\mathbf{1}^T x),\qquad \beta(x,y,z)=(x-\kappa \mathbf{1} z).$$
One can easily check that properties~\eqref{Prop1} and~\eqref{Prop2} hold. For property~\eqref{Prop3}, we first note that monotonicity of the map $x\mapsto \tilde{f}(x)+\kappa \mathbf{1}\mathbf{1}^Tx$ follows from the definition of $\kappa$, and criterion~\eqref{ConditionMonotoneMap}. Using criterion~\eqref{ConditionMonotoneMap} again we see that the map $x\mapsto (1-t)x+t(\tilde{f}(x)+\kappa \mathbf{1}\mathbf{1}^Tx)$ is also monotone for each $t\in [0,1]$. Finally, we conclude that $h_t$ is monotone for every $t\in [0,1]$. Assume now that $h_\tau \circ \alpha(x)\preceq h_\tau \circ\alpha(x')$ for some $\tau\in [0,1]$. In particular this implies that $x\preceq x'$ by definition of $h_t$. It then follows from monotonicity of $h_t$ and $\alpha$ that we have $h_t\circ\alpha(x)\preceq h_t\circ \alpha(x')$ for $t\in[\tau,1]$.

Consider now the function $Y$ defined by equality~\eqref{FunctionY} in the Proof of Theorem~\ref{Thm:MonotoneFlow}. Using such definition for $Y$, but with the function $h_t$ defined in~\eqref{Homotopy}, we conclude from the proof of Theorem~\ref{Thm:MonotoneFlow} the existence of an input $(s,W,b):[0,\tau]\to \R\times\R^{(2n+1)\times (2n+1)}\times \R^{2n+1}$ so that the flow $\phi^\tau:\R^{2n+1}\to \R^{2n+1}$ of~\eqref{ControlSystemModel} satisfies:
\begin{equation}
\label{BoundFlows}
\Vert h_1\circ\alpha-\phi^\tau\circ \alpha\Vert_{L^\infty(E)}\le \frac{\varepsilon}{2(1+\kappa)}.
\end{equation}
We therefore have:
\begin{eqnarray}
\left\Vert \tilde{f}-\beta\circ \phi^\tau\circ\alpha\right\Vert_{L^\infty(E)}&=&\left\Vert \beta\circ h_1\circ \alpha-\beta\circ \phi^\tau\circ\alpha\right\Vert_{L^\infty(E)}\notag\\
&\le & (1+\kappa)\left\Vert h_1\circ \alpha- \phi^\tau\circ\alpha\right\Vert_{L^\infty(E)}\notag\\
&\le& \frac{\varepsilon}{2}\notag ,
\end{eqnarray}
where the first inequality follows from $(1+\kappa)$ being the Lipschitz constant of $\beta$ and the second from~\eqref{BoundFlows}. Finally, we use the preceding inequality to establish:
\begin{align*}
\left\Vert f-\beta\circ \phi^\tau\circ\alpha\right\Vert_{L^\infty(E)}\le& \left\Vert f-\tilde{f}\right\Vert_{L^\infty(E)}\\
&+\left\Vert \tilde{f}-\beta\circ \phi^\tau\circ\alpha\right\Vert_{L^\infty(E)}\\
\le& \frac{\varepsilon}{2}+\frac{\varepsilon}{2}=\varepsilon.
\end{align*}
\end{proof}

\section{Outlook}\label{section:outlook}
We have provided sharp uniform approximation results for residual neural networks under mild assumptions on the class of activations functions. Our approach combines, for the first time, control-theoretic ideas to obtain such results. In particular, we utilize Lie algebraic techniques to study controllability of ensemble control systems and elucidate the memorization power of neural networks. We additional leverage monotonicity to build upon controllability and establish uniform approximation. Although the results in this paper are existential in nature, recent results suggest that training algorithms can be adapted to enforce monotonicity and thus guarantee uniform approximation bounds~\cite{MM-BG-PT:21}. The focal objective of our future work is the use of our uniform approximation results to place neural networks in control loops, while providing safety and robustness guarantees.

\section{Appendix}

The proof of Theorem~\ref{Thm:Controllability} is based on two  technical results. The first characterizes the rank of a certain matrix that will be required for our controllability result. In essence, the proof of this result follows from~\cite[Proposition~1]{krattenthaler2001advanced}, however, we provide a proof for completeness.
Throughout the proof, for simplicity of presentation, we assume that $ \sigma $ is injective and satisfies the quadratic differential equation $D\sigma=a_0+a_1\sigma+a_2\sigma^2$ with $a_2\ne 0$. Nevertheless, as indicated in Definition~\ref{def:activation-class}, it is enough for $ D^j\sigma$ to be injective and to satisfy the mentioned quadratic differential equation for some $ j \in \mathbb{N}_0$.

\begin{lemma}\longthmtitle{A determinant formula for quadratic differential equations}
\label{LemmaDet}
Let $\xi:\R\to \R$ be a function that satisfies the quadratic differential equation: 
\[
D\xi(x)=a_0+a_1\xi(x)+a_2\xi^2(x),
\]
where $a_0,a_1,a_2\in \R$. Suppose that derivatives of $ \xi $ of up to order $ (\ell -2) $ exist at $ \ell $ points $ x_1,\ldots, x_{\ell}\in \R $. 
Then, the determinant of the matrix:
\begin{align}
&\label{VandermontLikeMatrix}
L(x_1,x_2,\ldots, x_\ell)\cr
&=\begin{bmatrix}
1 & 1 & \hdots & 1\\
\xi(x_1) & \xi(x_2) & \hdots & \xi(x_\ell)\\
D \xi(x_1) & D\xi(x_2) & \hdots & D\xi(x_\ell)\\
\vdots&\vdots&\ddots&\vdots\\
D^{\ell-2} \xi(x_1) & D^{\ell-2}\xi(x_2) & \hdots &D^{\ell-2}\xi(x_\ell)
\end{bmatrix},\qquad\qquad
\end{align}
is given by:
\begin{equation}
\label{VandermontLikeDeterminant}
\det L(x_1,x_2,\ldots, x_\ell)=\prod_{i=1}^{\ell-2} i!a_2^{i} \prod_{1\le i<j\le\ell}(\xi(x_i)-\xi(x_j)).
\end{equation}
\end{lemma}

\begin{proof}
We assume that the elements of the set $ \{x_1,x_2,\ldots, x_\ell\} $ are distinct, as otherwise, the determinant is clearly zero. We also assume that $ \ell\geq 3 $ to exclude the trivial case. 
First, by the Vandermonde determinant formula, we have that:
\begin{align}
V_0(x_1,x_2,\ldots, x_\ell):=&
\begin{vmatrix}
1 & 1 & \hdots & 1\\
\xi(x_1) & \xi(x_2) & \hdots & \xi(x_\ell)\\
\xi^2(x_1) & \xi^2(x_2) & \hdots & \xi^2(x_\ell)\\
\vdots&\vdots&\ddots&\vdots\\
\xi^{\ell-1} (x_1) & \xi^{\ell-1} (x_2) & \hdots &\xi^{\ell-1} (x_\ell)
\end{vmatrix}\cr
=&\prod_{1\le i<j\le\ell}(\xi(x_i)-\xi(x_j)).\label{eq:Van}
\end{align}
Our proof technique is to use elementary row operations to construct the determinant of $ L(x_1,x_2,\ldots, x_\ell) $ from~\eqref{eq:Van}. To illustrate the idea, let us use~\eqref{eq:Van} to show that:
\begin{align*}
V_1(x_1,x_2,\ldots, x_\ell):=&\begin{vmatrix}
1 & 1 & \hdots & 1\\
\xi(x_1) & \xi(x_2) & \hdots & \xi(x_\ell)\\
D\xi(x_1) & D\xi(x_2) & \hdots & D\xi(x_\ell)\\
\xi^3(x_1) & \xi^3(x_2) & \hdots & \xi^3(x_\ell)\\
\vdots&\vdots&\ddots&\vdots\\
\xi^{\ell-1} (x_1) & \xi^{\ell-1} (x_2) & \hdots &\xi^{\ell-1} (x_\ell)
\end{vmatrix}\\
=&a_2\prod_{1\le i<j\le\ell}(\xi(x_i)-\xi(x_j)).
\end{align*}
For later use, we denote by $ V_{i} (x_1,x_2,\ldots, x_\ell) $ the determinant of the matrix constructed by substituting rows $ 3 $ to $ i $ in $ V_{0} (x_1,x_2,\ldots, x_\ell) $ by derivatives of order $ 1 $ to $ i-2 $, respectively.  First, note that multiplying the third row of $L(x_1,x_2,\ldots, x_\ell)$ by $a_2$ leads to:
\begin{align*}
\begin{vmatrix}
1 & 1 & \hdots & 1\\
\xi(x_1) & \xi(x_2) & \hdots & \xi(x_\ell)\\
a_2\xi^2(x_1) & a_2\xi^2(x_2) & \hdots & a_2\xi^2(x_\ell)\\
\xi^3(x_1) & \xi^3(x_2) & \hdots & \xi^3(x_\ell)\\
\vdots&\vdots&\ddots&\vdots\\
\xi^{\ell-1} (x_1) & \xi^{\ell-2} (x_1) & \hdots &\xi^{\ell-1} (x_\ell)
\end{vmatrix}=a_2V_0(x_1,x_2,\ldots, x_\ell).
\end{align*}
Moreover, by the fact that the determinant is unchanged by adding a constant multiple of a row to another row, 
we can conclude that:
\[
V_1(x_1,x_2,\ldots, x_\ell)=a_2V_0(x_1,x_2,\ldots, x_\ell),
\]
proving the claim. The idea of the proof is to use this same procedure, row by row, to construct $ D^i\xi(x_j) $ in the entry $ (i+2)\times j $ of the matrix. In order to proceed, however, we need to find a formula for $ D^i\xi(x) $, where $ x \in \R $. Note that, for $ i\geq 2 $, we have that:
\begin{align*}
D^i\xi(x)
=&a_1D^{i-1}\xi(x)+2a_2\frac{d}{dx^{i-2}}(\xi(x)D\xi(x))\\
=&a_1D^{i-1}\xi(x)\\
&+2a_2\sum_{k=0}^{i-2}\binom{i-1}{k}D^{i-k-2}\xi(x)D^{k+1}\xi(x),
\end{align*}
and $ D^i\xi(x)$, as a polynomial in $ \xi(x) $, is of degree $ (i+1) $. We now make an observation that finishes the proof. In particular, in the computation of $ V_1(x_1,x_2,\ldots, x_\ell) $ and in order to construct $ D\xi(x) $ in the third row, we only needed to know the coefficient of the highest degree monomial, in terms of $ \xi(x) $, that constitutes $ D\xi(x) $. In other words, the lower degree terms do not contribute to the determinant, as they can be constructed, without changing the determinant, 
from previous rows. Using this observation, the term 
$ a_1D^{i-1}\xi(x) $ in the expansion of $ D^i\xi(x) $ does not contributed to $ V_{i} (x_1,x_2,\ldots, x_\ell) $, as it can be added from the previously constructed rows. Using this reasoning for all $ i $, we conclude that the determinant of $ L(x_1,\ldots, x_{\ell}) $ is independent of $ a_1 $, and $ a_0 $. Substituting $ a_0=0 $ and $ a_1=0 $, since $ D^i\xi(x)=i!a_2^i\xi^{i+1} $, we have that:
\begin{align*}
\det &L(x_1,\ldots, x_{\ell})\\
&=\prod_{i=1}^{\ell-2} i!a_2^{i} V_0(x_1,x_2,\ldots, x_\ell)\\
&=\prod_{i=1}^{\ell-2} i!a_2^{i}  \prod_{1\le i<j\le\ell}(\xi(x_i)-\xi(x_j)),
\end{align*}
as claimed. 
\end{proof}

Our second technical result is stated next, for which we provide an elementary proof to keep the manuscript self-contained.

\begin{proposition}
\longthmtitle{An open dense submanifold of $\R^{n\times d}$}
\label{Prop:Connected}
Let $N\subset \R^{n\times d}$ be the set defined by:
\begin{equation}
N=\{A\in  \R^{n\times d} \ \vert \ \prod_{1\le i<j\le d}(A_{\ell i}-A_{\ell j})=0, \ \ell \in\{1,\ldots, n\}\}.\notag
\end{equation}
The set $M=\R^{n\times d}\backslash N$ is an open and dense submanifold of $\R^{n\times d}$ which is connected when $n>1$.
\end{proposition}
\begin{proof}
Note that $N$ is a finite union of vector subspaces of $\R^{n\times d}$, hence topologically closed. Therefore, $\R^{n\times d}\backslash N$ is an open and dense subset of $\R^{n\times d}$ and thus a submanifold of dimension $n d$. It remains to show that $M$ is connected.

Let $\Ain,\Af \in M$, and assume that $ n>1 $. We prove that there exists a continuous curve $ \gamma:[0,n]\to M $ connecting $\Ain$ to $\Af$, i.e., $\gamma(0)=\Ain$ and $\gamma(n)=\Af$. Since $\Ain\in M$ there exists $\lin\in \{1,\hdots,n\}$ so that 
\[ 
\prod_{1\le i<j\le d}(A_{\lin i}-A_{\lin j})\neq 0.
\] 
Similarly, since $\Af\in M$ there exists $\lf\in \{1,\hdots,n\}$ so that 
\[
\prod_{1\le i<j\le d}(A_{\lf i}-A_{\lf j})\neq 0.
\] 
We first consider the case where $\lin\ne \lf$ (which is possible since $n>1$). Without loss of generality assume that $\lin=n$ and $\lf=1$ and let $ \gamma^{k}: \R^{n\times d}\times [k-1,k]\rightarrow \R^{n\times d} $ be defined as:
\[
\gamma^{k}_\lambda(A)=\gamma^k(A,\lambda)=\begin{bmatrix}
A_{1 \bullet}\\
\vdots\\
A_{k-1 \bullet}\\
A_{k\bullet}+(\lambda-(k-1))\Af_{k\bullet}\\
A_{k+1\bullet}\\
\vdots\\
A_{n\bullet}
\end{bmatrix},
\]
where $ k\in \{1,\hdots,n\} $ and $A_{k\bullet}$ denotes the $k$th row of $A$. We now define the curve $ \gamma:  [0,n]\rightarrow \R^{n\times d} $ by:
$$\gamma(\lambda)=\gamma^k_{\lambda}\circ \gamma^{k-1}_{k-1}\circ\hdots\circ \gamma^2_2\circ \gamma^1_1(\Ain),\quad \lambda\in [k-1,k],$$
and note that $ \gamma(\lambda) \in M $ for all $ \lambda \in [0,n] $. This is because, by definition, there exists at least one index $ \ell \in \{1,\ldots, n\} $ such that 
\[ 
\prod_{1\le i<j\le d}(\gamma_{\ell i}(\lambda)-\gamma_{\ell j}(\lambda))\neq 0.
\] 
When $\lambda\le n-1$, we can choose $\ell$ to be $\lin$ because $\gamma_{\lin \bullet}(\lambda)=\gamma_{n \bullet}(\lambda)=\Ain_{n\bullet}$. When $\lambda\ge n-1$, we can choose $\ell$ to be $\lf$ because $\gamma_{\lf \bullet}(\lambda)=\gamma_{1 \bullet}(\lambda)=\Af_{1\bullet}$. Since $\gamma$ is the composition of continuous functions, it is continuous. Moreover, by construction, $ \gamma(0) =\Ain$ and $ \gamma(n)=\Af$.

We now consider the case where $\lin=\lf$. Since $n>1$, we can choose $A\in M$ so that 
\[ 
\prod_{1\le i<j\le d}(A_{\ell i}-A_{\ell j})\neq 0 
\] 
with $\ell\ne \lf$ and $\ell\ne \lin$. By the previous argument, there is a continuous curve connecting $\Ain$ to $A$ without leaving $M$ and there is also a continuous curve connecting $A$ to $\Af$ without leaving $M$. Therefore, their concatenation produces the desired continuous curve $\gamma$ connecting $\Ain$ to $\Af$ and the proof is finished.
\end{proof}

	\bibliographystyle{plain}  
\bibliography{alias,bib}

\end{document}